\newcommand{\Rbb}{\mathbb{R}}
\newcommand{\A}{\mathcal{A}}
\newcommand{\B}{\mathcal{B}}
\newcommand{\E}{\mathcal{E}} 
\newcommand{\G}{\mathcal{G}}
\renewcommand{\O}{\mathcal{O}}
\newcommand{\V}{\mathcal{V}}
\newcommand{\D}{\mathbf{D}}
\newcommand{\Err}{\mathbf{E}}
\newcommand{\Hk}{\mathbf{H}}
\newcommand{\I}{\mathbf{I}}
\renewcommand{\L}{\mathbf{L}} 
\newcommand{\Lam}{\mathbf{\Lambda}}
\newcommand{\M}{\mathbf{M}}
\newcommand{\N}{\mathbf{N}}
\newcommand{\Q}{\mathbf{Q}}
\newcommand{\R}{\mathbf{R}}
\newcommand{\Rk}{\mathbf{R}_k}
\newcommand{\U}{\mathbf{U}}
\newcommand{\Uk}{\mathbf{U}_k}
\newcommand{\Ueta}{\mathbf{U}_\eta}
\newcommand{\W}{\mathbf{W}}
\newcommand{\Ass}{\mathbf{X}} 
\newcommand{\ZT}{\mathbf{Z}^\top}
\newcommand{\bLambda}{\mathbf{\Lambda}}
\newcommand{\bPhi}{\mathbf{\Phi}}
\newcommand{\bPsi}{\mathbf{\Psi}}
\newcommand{\bTheta}{\mathbf{\Theta}}
\newcommand{\bSigma}{\mathbf{\Sigma}}
\newcommand{\bphi}{\bm{\phi}}
\newcommand{\bpsi}{\bm{\psi}}
\newcommand{\tbPsi}{\tilde{\bPsi}}
\newcommand{\tH}{\tilde{\mathbf{H}}_k}
\renewcommand{\S}[2]{\mathbf{S}_{#1}^{#2}}
\newcommand{\Sbar}[2]{\overline{\S{#1}{#2}}}
\newcommand{\iid}{i.i.d.~}
\newcommand{\ie}{i.e.,\xspace}
\newcommand{\Copt}[1]{{C}_{#1}}
\newcommand{\X}[2]{\I_{#1\times #2}}
\newcommand{\Xopt}[1]{{\mathbf{X}}_{#1}}
\newcommand{\XoptT}[1]{{\mathbf{X}}^\top_{#1}}
\DeclareMathOperator*{\argmin}{arg\,min}
\newtheoremstyle{style}
{7pt} 
{0pt} 
{\itshape} 
{} 
{\bfseries} 
{.} 
{.5em} 
{} 
\theoremstyle{style}
\newtheorem{theorem}{Theorem}[section]
\newtheorem{definition}{Definition}[section]
\newtheorem{lemma}{Lemma}[section]
\newtheorem{corollary}{Corollary}[section]
\newcommand{\al}[1]{\textcolor{blue}{\textbf{Andreas:} \emph{#1}}}
\newcommand{\swapversion}[2]{#2}
\begin{document} 

\twocolumn[
\icmltitle{Fast Approximate Spectral Clustering for Dynamic Networks}

\begin{icmlauthorlist}
\icmlauthor{Lionel Martin}{epfl}
\icmlauthor{Andreas Loukas}{epfl}
\icmlauthor{Pierre Vandergheynst}{epfl}
\end{icmlauthorlist}

\icmlaffiliation{epfl}{\'Ecole Polytechnique F\'ed\'erale de Lausanne, Lausanne, Switzerland}

\icmlcorrespondingauthor{Lionel Martin}{lionel.martin@epfl.ch}

\icmlkeywords{clustering, spectral clustering, dynamic clustering, graph signal processing, graph filtering, compressive spectral clustering, eigendecomposition, time series analysis, machine learning, ICML}

\vskip 0.3in
]

\printAffiliationsAndNotice{}

\begin{abstract} 

Spectral clustering is a widely studied problem, yet its complexity is prohibitive for dynamic graphs of even modest size.
%
We claim that it is possible to reuse information of past cluster assignments to expedite computation. Our approach builds on a recent idea of sidestepping the main bottleneck of spectral clustering, i.e., computing the graph eigenvectors, by using fast Chebyshev graph filtering of random signals.
%
We show that the proposed algorithm achieves clustering assignments with quality approximating that of spectral clustering and that it can yield significant complexity benefits when the graph dynamics are appropriately bounded. 

\end{abstract} 

\section{Introduction}

Spectral clustering (SC) is one of the most well-known methods for clustering multivariate data, with numerous applications in  biology (e.g., protein-protein interactions, gene co-expression) and social sciences (e.g., call graphs, political study) among others~\cite{von2007tutorial,fortunato2010community}.
However, because of its inherent dependence on the spectrum of some large graph, SC is also notoriously slow. This has motivated a surge of research focusing in reducing its complexity, for example using matrix sketching methods~\cite{fowlkes2004spectral,li2011time,gittens2013approximate} and more recently compressive sensing techniques~\cite{ramasamy2015compressive,tremblay2016compressive}.

Yet, the clustering complexity is still problematic for dynamic graphs, where the edge set is a function of time. Temporal dynamics constitute an important aspect of many network datasets and should be taken into account in the algorithmic design and analysis. Unfortunately, SC is poorly suited to this setting as eigendecomposition --its main computational bottleneck-- has to be recomputed from scratch whenever the graph is updated, or at least periodically~\cite{ning2007incremental}. This is a missed opportunity since the clustering assignments of many real networks change slowly with time, suggesting that successive algorithmic runs wastefully repeat similar computations.

Motivated by this observation, this paper proposes an algorithm that reuses information of past cluster assignments to expedite computation. 
Different from previous work on dynamic clustering, our objective is \textit{not} to improve the clustering quality, for example by enforcing a temporal-smoothness hypothesis~\cite{chakrabarti2006evolutionary,chi2007evolutionary} or by using tensor decompositions~\cite{gauvin2014detecting,tu2016detecting}. On the contrary, we focus entirely on decreasing the computational overhead and aim to produce assignments that are provably close to those of SC. 

Our work is inspired by the recent idea of sidestepping eigendecomposition by utilizing as features random signals that have been filtered over the graph~\cite{tremblay2016compressive}. Our main argument is that, instead of computing the clustering assignment of a graph $G_1$ using $d$ filtered signals as features, one may utilize a percentage of features of a different graph $G_2$ without significant loss in accuracy, as long as $G_1$ and $G_2$ are appropriately close. This leads to a natural clustering scheme for time-varying topologies: each new instance of the dynamic graph is clustered using $pd$ signals computed previously and only $(1-p)d$ new filtered signals, where $p$ is a percentage.
Moreover, inspired by similar ideas we can also attain further complexity reductions with respect to the graph filter design, i.e., by identifying the $k$-th eigenvalue.        


Concretely, we provide the following contributions:

\textit{1.\,In Section~\ref{sec:static} we refine the analysis of compressive spectral clustering (CSC) presented in~\cite{tremblay2016compressive}}. Our goal is to move from assertions about distance preservation to guarantees about the quality of the solution of CSC itself. We prove that with probability at least $1 - \exp{(-t^2/2)}$, the quality of the clustering assignments of CSC and SC differ by less than $2\sqrt{k/d} \, (\sqrt{k} + t)$. Our analysis suggests that $d = \O(k^2)$ filtered signals are sufficient to guarantee a good approximation, while not making any restricting assumptions about the graph structure, e.g., assuming a stochastic block model~\cite{pydi2017spectral}.

\textit{2.\,In Section~\ref{sec:dynamic}, we focus on dynamic graphs and propose dynamic CSC, an algorithm that reuses information of past cluster assignments to expedite computation.} We discover that the algorithm's ability to reuse features is inherently determined by a metric of spectral similarity $\rho$ between consecutive graphs. Indeed, we prove that, when $pd$ features are reused, the clustering assignment quality of dynamic CSC approximates with high probability that of CSC up to an additive term in the order of $p \rho$.  


\textit{3.\,We complement our analysis with a numerical evaluation in Section~\ref{sec:experiments}.} Our experiments illustrate that dynamic CSC yields in practice computational benefits when the graph dynamics are bounded, while producing assignments with quality closely approximating that of SC.

\section{Background}
\label{sec:background}

We start by  briefly summarizing the standard  method for spectral clustering as well as the idea behind the more recent fast (compressive) methods.



\subsection{Spectral clustering (SC)}

\swapversion{
Spectral clustering is a standard algorithm for graph clustering, i.e., for associating nodes of the graph with clusters so that the number of inter- and intra-cluster connections are minimized and maximized, respectively.}{} To determine the best node-to-cluster assignment, spectral clustering entails solving a $k$-means problem, with the eigenvectors of the graph Laplacian $\L$ as features. 

Let $\G = (\V, \E, \W)$ be a simple symmetric undirected weighted graph with a fixed set of vertices $\V = \{v_1, v_2, \dots, v_n \}$ of cardinality $n$, and a set of $m$ edges $\E \subset \V \times \V$ where the edge between $v_i$ and $v_j$ has weight $\W_{i, j} > 0$ if it exists, and $\W_{i, j} = 0$ otherwise. 
Some versions of spectral clustering make use of the combinatorial Laplacian $\L = \D - \W$ and others of the normalized Laplacian $\L = \I- \D^{-1/2} \W \D^{-1/2} $, see e.g.,~\cite{ng2002spectral, shi2000normalized}. Here, $\D$ is the diagonal matrix whose entries are the degree of the nodes in the graph. We denote the eigendecomposition of the Laplacian of choice by $\L = \U \Lam \U^\top$, with the diagonal entries of $\Lam$ sorted in non-decreasing order, such that $0 = \lambda_1 \leq \lambda_2 \leq \ldots \leq \lambda_n$. 

\swapversion{
It follows directly from the relaxation of the (normalized) cut problem  for $k$ classes that the $k$ eigenvectors associated with the smallest eigenvalues of $\L$ form the optimal solution.} Spectral clustering consists \swapversion{thus} of computing the first $k$ eigenvectors of $\L$ arranged in a matrix called $\Uk$ and subsequently computing a $k$-means assignment of the $n$ vectors of size $k$ found in the rows of $\Uk$. 
Formally, if $\bPhi \in \Rbb^{n\times d}$ is the feature matrix (here $\bPhi= \Uk$ and $d = k$), and $k$ is a positive integer denoting the number of clusters, the $k$-means clustering problem finds the indicator matrix $\Xopt{} \in \Rbb^{n\times k}$ which satisfies
\begin{equation}
    \Xopt{\Phi} = \argmin_{\mathbf{X}\in\mathcal{X}} \|\bPhi - \mathbf{X}\mathbf{X}^\top\bPhi\|_F,
    \label{eq:SC_cost}
\end{equation}
with associated cost $\Copt{\Phi} = \|\bPhi - \Xopt{\Phi}\Xopt{\Phi}^T \bPhi\|_F$.
Symbol $\mathcal{X}$ denotes the set of all $n\times k$ indicator matrices $\mathbf{X}$. These matrices indicates the cluster membership of each data point by setting
\begin{equation}
    \mathbf{X}_{i, j} = \begin{cases}
        \frac1{\sqrt{s_j}} &\mbox{if data point } i \mbox{ belongs to cluster } j\\
        0 & \mbox{otherwise,}
    \end{cases}
\end{equation}
where $s_j$ is the size of cluster $j$, also equals to the number of non-zero elements in column $j$. Note that the cost described in eq.~\eqref{eq:SC_cost} is the square root of the more traditional definition expressed with the distances to the cluster centers \citep[][Sec 2.3]{cohen2015dimensionality}.
We refer the reader to the work by Boutsidis et al.~\citeyearpar{boutsidis2015spectral} and its references for more details.
 


\subsection{Compressive spectral clustering (CSC)}

To reduce the cost of spectral clustering, i.e.~$\O(kn^2)$, Tremblay et al.~\citeyearpar{tremblay2016compressive} and Boutsidis et al.~\citeyearpar{boutsidis2015spectral} independently proposed to fasten spectral clustering using approximated eigenvectors based on random signals. The former also introduced the benefits of compressive sensing techniques reducing the total cost down to $\O(k^2 \log^2(k) + m n(\log(n) + k)$, where $m$ is the order of the polynomial approximation. Their argument consists of two steps: 

\emph{Step 1. Approximate features.} \swapversion{The targeted feature matrix $\bPhi= \Uk$ being costly to compute, one can instead approximate it with the projection of random signals over the graph.}{The costly to compute feature matrix $\bPhi= \Uk$ is approximated by the projection of a random matrix over the same subspace.} In particular, let $\R \in \Rbb^{N\times d}$ 
be a random (gaussian) matrix with centered \iid entries, each having variance $\frac1d$. We can project $\R$ onto $\textit{span}\{\Uk\}$ by filtering each one of its columns by a low-pass graph filter $g(\L) = \Hk$ defined as  
\begin{equation}
    \label{eq:Hk}
    \Hk = \U \left(
        \begin{array}{cc}
            \mathbf{I}_k & 0\\
            0 & 0
        \end{array}
    \right) \U^\top.
\end{equation}
It is then a simple consequence of the Johnshon-Lindenstrauss lemma that the rows $\bpsi_i^\top$ of matrix $ \bPsi = \Hk\R$ can act as a replacement of the features used in spectral clustering, \ie the rows $\bphi_i^\top$ of $\bPhi = \Uk$.

\begin{theorem}[adapted from~\cite{tremblay2016compressive}] For every two nodes $v_i$ and $v_j$ the restricted isometry relation 
\begin{equation}
    (1-\varepsilon) \|\bphi_i - \bphi_j\|_2 \leq \|\bpsi_i - \bpsi_j\|_2 \leq(1+\varepsilon) \|\bphi_i - \bphi_j\|_2
\end{equation}
holds with  probability larger than $1 - n^{-\beta}$, as long as the dimension is $d > \frac{4+2\beta}{\varepsilon^2/2 - \varepsilon^3/3}\log(n)$.
\label{theorem:csc}
\end{theorem}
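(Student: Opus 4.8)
The plan is to reduce the statement to the classical Johnson--Lindenstrauss concentration inequality by exploiting the fact that $\Hk$ is nothing but the orthogonal projector onto $\spn\{\Uk\}$. First I would rewrite the filter from eq.~\eqref{eq:Hk} as $\Hk = \Uk \Uk^\top$, so that $\bPsi = \Hk \R = \Uk (\Uk^\top \R)$. Reading off the $i$-th row and using $e_i^\top \Uk = \bphi_i^\top$, this gives $\bpsi_i = \R^\top \Uk\, \bphi_i$; writing $\mathbf{P} = \R^\top \Uk \in \Rbb^{d \times k}$ we obtain the clean identity
\begin{equation}
    \bpsi_i - \bpsi_j = \mathbf{P}\,(\bphi_i - \bphi_j).
\end{equation}
Hence every filtered pairwise distance is the image of the corresponding true pairwise distance $\bphi_i - \bphi_j \in \Rbb^k$ under the single random map $\mathbf{P}$, and the restricted isometry relation becomes a statement about how well $\mathbf{P}$ preserves norms of fixed vectors in $\Rbb^k$.

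The crux of the argument is then to show that $\mathbf{P}$ is a bona fide Gaussian embedding matrix, \ie that its entries are centered \iid Gaussians with variance $1/d$. Each column of $\R$ is a Gaussian vector with covariance $\tfrac1d \I_n$, and applying $\Uk^\top$ yields a Gaussian vector with covariance $\tfrac1d \Uk^\top \Uk = \tfrac1d \I_k$, since $\Uk$ has orthonormal columns; the columns of $\R$ being independent, so are those of $\Uk^\top \R$. This rotational-invariance step is the only place where the graph structure enters, and once it is established the problem is entirely classical: for a fixed $x \in \Rbb^k$ one has $\Esp\,\|\mathbf{P}x\|_2^2 = \|x\|_2^2$, and the standard single-vector Gaussian JL concentration bound gives
\begin{equation}
    \Prob\!\left[\,\big|\,\|\mathbf{P}x\|_2^2 - \|x\|_2^2\,\big| > \varepsilon \|x\|_2^2\,\right] \le 2\exp\!\left(-\frac{d}{2}\Big(\frac{\varepsilon^2}{2} - \frac{\varepsilon^3}{3}\Big)\right).
\end{equation}

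Applying this with $x = \bphi_i - \bphi_j$ controls a single pair, so the final step is a union bound. Since there are $\binom{n}{2} < n^2/2$ pairs, the probability that the isometry fails for at least one pair is at most $n^2 \exp(-\tfrac{d}{2}(\tfrac{\varepsilon^2}{2}-\tfrac{\varepsilon^3}{3}))$; requiring this to be below $n^{-\beta}$ and taking logarithms gives exactly $d > (4+2\beta)\log(n)/(\tfrac{\varepsilon^2}{2}-\tfrac{\varepsilon^3}{3})$, where the $4$ absorbs the $n^2$ from the pair count together with the factor $2$ from the two-sided tail. I expect the only real obstacle to be the normalization bookkeeping in the second paragraph: one must verify that projecting onto the $k$-dimensional eigenspace leaves the per-coordinate variance exactly at $1/d$, since any discrepancy there would propagate into the constant of the concentration exponent and spoil the precise threshold $4+2\beta$. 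Everything else is a direct invocation of JL concentration followed by a routine union bound.
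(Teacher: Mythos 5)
Your proof is correct and follows exactly the route the paper intends: the theorem is imported from Tremblay et al.\ (2016) with no proof given in this paper beyond the remark that it is ``a simple consequence of the Johnson--Lindenstrauss lemma,'' and your reduction $\bpsi_i - \bpsi_j = \mathbf{P}(\bphi_i - \bphi_j)$ with $\mathbf{P} = \R^\top \Uk$ shown to be an \iid Gaussian matrix with entries of variance $1/d$, followed by single-vector JL concentration and a union bound over the $\binom{n}{2}$ pairs, is precisely that argument and recovers the stated threshold $d > \frac{4+2\beta}{\varepsilon^2/2 - \varepsilon^3/3}\log(n)$. The only point worth spelling out is that the concentration inequality you invoke controls \emph{squared} norms while the theorem is stated for unsquared distances, but this is immediate since $1-\varepsilon \leq \sqrt{1-\varepsilon}$ and $\sqrt{1+\varepsilon} \leq 1+\varepsilon$ for $\varepsilon \in (0,1)$.
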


We note that, even though $\Hk \R$ is also expensive to compute, it can be approximated in $O(|\E|d m)$ number of operations using Chebychev polynomials~\cite{shuman2011chebyshev, hammond2011wavelets}, resulting in a small additive error that decreases with the polynomial order. 

\emph{Step 2. Compressive $k$-means.} The complexity is reduced further by computing the $k$-means step for only a subset of the nodes. The remaining cluster assignments are then inferred by solving a Tikhonov regularization problem involving $k$ additional graph filtering operations, each with a cost linear in $m|\E|$.

To guarantee a good approximation, it is sufficient to select $\O(\nu_k^2\log(k))$ nodes uniformly at random, where $\nu_k = \sqrt{n} \max_i {\|\bphi_i\|_2}$ is the global cumulative coherence. However as shown by Puy et al.~\citeyearpar{puy2016random}, it is always possible to sample $\O(k\log(k))$ nodes using a different distribution (variable density sampling).

In the following, we will present our theoretical results with respect to the non-compressed version of their algorithm.

\section{The approximation quality of static CSC} 
\label{sec:static}

Before delving to the dynamic setting, we refine the analysis of compressive spectral clustering. Our objective is to move from assertions about distance preservation currently known (see Thm.~\ref{theorem:csc}) to guarantees about the quality of the solution of CSC itself. Formally, let 
\begin{align}
    \Xopt{\Psi} = \argmin_{\mathbf{X} \in \mathcal{X}} \|\bPsi - \mathbf{X} \mathbf{X}^\top \bPsi\|_F.
    \label{eq:CSC_assignment}
\end{align}
be the clustering assignment obtained from using $k$-means with $\bPsi$ as features (CSC assignment), and define  the \emph{CSC cost} $C_\Psi$ as
\begin{align}
    C_\Psi = \|\bPhi - \Xopt{\Psi} \Xopt{\Psi}^\top \bPhi\|_F.
    \label{eq:CSC_cost}
\end{align}
The question we ask is: \emph{how close is $C_\Psi$ to the cost $C_{\Phi}$ of the same problem, where the assignment has been computed using $\bPhi$ as features, i.e., the SC cost corresponding to \eqref{eq:SC_cost}?} Note that we choose to express the approximation quality in terms of the difference of clustering assignment costs and not of the distance between the assignments themselves. This has the benefit of not penalizing approximation algorithms that choose alternative assignments of the same quality. 

This section is devoted to the analysis of the quality of the assignments outputted by CSC compared to those of SC for the same graph. 
%
%
Our central theorem, stated below, asserts that with high probability the two costs are close.
\begin{theorem}
    \label{thm:cost_CSC}
    The SC cost $C_\Phi$ and the CSC cost $C_\Psi$ are related by
    %
    \begin{equation}
        C_\bPhi \leq C_\Psi \leq C_\bPhi + 2 \sqrt{\frac{k}{d}} (\sqrt{k} + t),
    \end{equation}
    with probability at least $1-\exp(-t^2/2)$.
\end{theorem}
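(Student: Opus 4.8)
The plan is to reduce the statement to the spectrum of a single small Gaussian matrix. First I would note that, since $\Hk = \Uk\Uk^\top$ projects onto $\spn\{\Uk\}$, the feature matrix factors as $\bPsi = \Hk\R = \Uk(\Uk^\top\R) = \bPhi\,\M$ with $\M = \Uk^\top\R \in \Rbb^{k\times d}$; because $\Uk$ has orthonormal columns, rotation invariance of the Gaussian keeps the entries of $\M$ i.i.d.\ $\mathcal{N}(0,1/d)$. Hence the only randomness separating $\bPsi$ from $\bPhi$ is the right action of the $k\times d$ matrix $\M$, and every cost of interest is a Frobenius norm $\|Q_{\mathbf X}\bPhi\|_F$ or $\|Q_{\mathbf X}\bPsi\|_F = \|Q_{\mathbf X}\bPhi\M\|_F$, where $Q_{\mathbf X} = \I - \mathbf X\mathbf X^\top$ is the orthogonal projection complementary to the assignment $\mathbf X$.

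The lower bound $C_\bPhi \le C_\Psi$ is immediate from optimality, since $\Xopt{\Phi}$ minimizes $\|\bPhi - \mathbf X\mathbf X^\top\bPhi\|_F$ over indicator matrices and $\Xopt{\Psi}$ is one competitor. For the upper bound, the crux is a distortion estimate uniform over all assignments. I would control the extreme singular values of $\M$: as $\sqrt d\,\M^\top$ is a $d\times k$ standard Gaussian matrix, the classical Gaussian bound on extreme singular values (Davidson--Szarek) yields, with the stated probability, that every singular value of $\M$ lies in $[1-\delta,1+\delta]$ with $\delta = (\sqrt k + t)/\sqrt d$. On this event the bound $(1-\delta)\|A\|_F \le \|A\M\|_F \le (1+\delta)\|A\|_F$ holds for \emph{every} matrix $A$ with $k$ columns, as seen by diagonalizing $A^\top A$ and applying the singular-value bounds of $\M$ to its eigenvectors. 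The decisive feature is that this holds deterministically once the spectral event occurs, hence simultaneously for all $Q_{\mathbf X}\bPhi$, including the data-dependent choice $\mathbf X = \Xopt{\Psi}$.

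The conclusion then follows by chaining three inequalities. From $\|Q_{\Xopt{\Psi}}\bPhi\|_F \le \|Q_{\Xopt{\Psi}}\bPsi\|_F + \delta\|Q_{\Xopt{\Psi}}\bPhi\|_F$ together with $\|Q_{\mathbf X}\bPhi\|_F \le \|\bPhi\|_F = \sqrt k$ (projection), I get $C_\Psi \le \|Q_{\Xopt{\Psi}}\bPsi\|_F + \delta\sqrt k$. Optimality of $\Xopt{\Psi}$ for the $\bPsi$-cost gives $\|Q_{\Xopt{\Psi}}\bPsi\|_F \le \|Q_{\Xopt{\Phi}}\bPsi\|_F$, and the upper distortion bound gives $\|Q_{\Xopt{\Phi}}\bPsi\|_F \le (1+\delta)\|Q_{\Xopt{\Phi}}\bPhi\|_F \le C_\bPhi + \delta\sqrt k$. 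Together these yield $C_\Psi \le C_\bPhi + 2\delta\sqrt k = C_\bPhi + 2\sqrt{k/d}\,(\sqrt k + t)$, as claimed.

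The main obstacle is conceptual rather than computational: $\Xopt{\Psi}$ is random and statistically coupled to $\M$, so one cannot naively apply a per-assignment concentration and union-bound over the super-exponentially many indicator matrices without incurring dimension-dependent losses. The factorization $\bPsi = \bPhi\M$ is precisely what removes this difficulty, because it relocates all randomness into $\M$ and lets a single control of $\M$'s spectrum serve all assignments at once. The only delicate accounting is the failure probability: controlling both the largest and the smallest singular value of $\M$ through the Gaussian tail bounds naively costs $2\exp(-t^2/2)$, so matching the stated $1 - \exp(-t^2/2)$ requires using these tail inequalities with some care.
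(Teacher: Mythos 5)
Your proof is correct and arrives at the paper's bound with the same constant, but by a genuinely different route. The paper works additively: Lemma~\ref{lem:csc_error} shows that $C_\Phi \leq C_\Psi \leq C_\Phi + 2\|\bPsi - \bPhi\X{k}{d}\Q\|_F$ for \emph{any} unitary $\Q$, then Theorem~\ref{thm:error_sing} constructs a specific $\Q$ from the SVD of $\R' = \X{k}{n}\U^\top\R$ so that this feature error equals \emph{exactly} $\|\bSigma - \X{k}{d}\|_F$, and finally Corollary~\ref{cor:approx_static_feat} bounds the singular values. You skip the unitary alignment entirely: writing $\bPsi = \bPhi\M$ with $\M = \Uk^\top\R$ (the same random matrix as the paper's $\R'$), you convert the same singular-value control into a uniform two-sided distortion $(1-\delta)\|A\|_F \leq \|A\M\|_F \leq (1+\delta)\|A\|_F$ over all matrices $A$ with $k$ columns, valid in particular for the data-dependent assignment $\Xopt{\Psi}$, and you recover the additive form through $C_\Phi \leq \|\bPhi\|_F = \sqrt{k}$. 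The chaining through the two optimality properties is identical to the paper's. What the paper's formulation buys: the equality in Theorem~\ref{thm:error_sing} underpins its tightness remark, and the additive feature-error term $\|\bPsi - \bPhi\X{k}{d}\Q\|_F$ is reused verbatim in the dynamic analysis (Theorem~\ref{thm:cost_dynamic}). What yours buys: it is more streamlined (no SVD alignment construction), it makes transparent why no union bound over the exponentially many indicator matrices is needed (all randomness is confined to $\M$), and it yields essentially for free the multiplicative bound $C_\Psi \leq \frac{1+\delta}{1-\delta}\,C_\Phi$, which is sharper than the additive one whenever $C_\Phi \ll \sqrt{k}$.

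Your closing worry about the failure probability is well taken, but you are in exactly the same position as the paper. The event you need --- all singular values of $\M$ lying in $[1-\delta, 1+\delta]$ with $\delta = (\sqrt{k}+t)/\sqrt{d}$ --- costs $1 - 2\exp(-t^2/2)$ by the two-sided Gaussian bound. The paper's Corollary~\ref{cor:Verhynin} states only the upper tail (with probability $1-\exp(-t^2/2)$), yet the proof of Corollary~\ref{cor:approx_static_feat} squares $\sigma_i(\R') - 1$, which silently requires the lower tail $\sigma_i(\R') \geq 1 - \delta$ as well. So the stated probability is loose by the same factor of two in both arguments; neither is fully rigorous as written, and the honest constant for both is $1 - 2\exp(-t^2/2)$.
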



The result above emphasizes the importance of the number of filtered signals $d$ and directly links it to the distance with the optimal assignment for the spectral features. Indeed, one can see that the difference between the two costs vanishes when $d$ is sufficiently large. Importantly, setting $d = \Omega(k^2)$ guarantees a small error. Keeping in mind that the complexity of CSC is $\O(k^2 \log^2(k) + m n(\log(n) + k)$, we see that our result implies that CSC is particularly suitable when the number of desired cluster is small, e.g.,  $k = O(1)$ or $k = O(\log{n})$.     

\swapversion{
We propose as a side note a multiplicative expression for the error term. Since $C_\Phi^2$ is the cost of the optimal k-means, we can write it as
\begin{equation}
    C_\Phi = \sqrt{\sum_{c:\text{classes}}{\sum_{x\in c}{\|f_x - \mu_c\|_2^2}}} \geq \sqrt{k s},
\end{equation}
where $s$ is the minimal spread of the data points in a class and $k$ the number of classes. Using this fact, we can rewrite the result of Thm.~\ref{thm:cost_CSC} as follows
\begin{equation}
    C_\Psi \leq C_\Phi \left(1 + \frac{2(\sqrt{k}+ t)}{\sqrt{sd}}\right).
\end{equation}

We remark here again that the error can be set arbitrarly small with increasing values of $d$ but also that well clusterable graphs (with small spread $s$) are harder to bound.
}{}

\subsection{The approximation quality of CSC}
\label{subsec:static2}


The first step in proving Thm.~\ref{thm:cost_CSC} is to establish the relation between $C_\Phi$ and $C_\Psi$. The following lemma relates the two costs by an additive error term that depends on the feature's differences $\|\bPsi - \bPhi\X{k}{d}\Q\|_F$\footnote{We assume all along that $d \geq k$ but a similar result holds when $d < k$. In this case, we can consider the term $\|\bPsi\X{d}{k}\Q - \bPhi\|_F$ and derive the optimal unitary $\Q$ in order to obtain the same result as Thm.~\ref{thm:error_sing}. However there is little interest in practice since one cannot expect the recovery of $k$ eigenvectors with less random filtered signals as shown in \cite{paratte2016fast}.}. Since $\bPhi$ and $\bPsi$ have different sizes we introduced the multiplication by a unitary matrix $\Q$. We will first show that any unitary $\Q$ can be picked in Lem.~\ref{lem:csc_error} and then derive the optimal $\Q$, the one minimizing the additive term, in Thm.~\ref{thm:error_sing}.
\begin{lemma}
For any unitary matrix $\Q \in \Rbb^{d\times d}$, the SC cost $C_\Phi$ and the CSC cost $C_\Psi$ are related by
\begin{equation}
    C_\Phi \leq C_\Psi \leq C_\Phi + 2 \|\bPsi - \bPhi \X{k}{d} \Q\|_F,
\end{equation}
where, the matrix $\X{\ell}{m}$ of size $\ell\times m$ above contains only ones on its diagonal and serves to resize matrices.
\label{lem:csc_error}
\end{lemma}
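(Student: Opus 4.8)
The plan is to prove the two inequalities separately: the lower bound is an immediate consequence of optimality, while the upper bound follows from a short chain of triangle inequalities combined with one invariance observation. For the lower bound $C_\Phi \leq C_\Psi$, I would note that by its definition in \eqref{eq:SC_cost}, $\Xopt{\Phi}$ minimizes $\|\bPhi - \mathbf{X}\mathbf{X}^\top\bPhi\|_F$ over all indicator matrices $\mathbf{X} \in \mathcal{X}$; since $\Xopt{\Psi}$ is itself one such matrix, we immediately get $C_\Phi \leq \|\bPhi - \Xopt{\Psi}\Xopt{\Psi}^\top\bPhi\|_F = C_\Psi$.

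For the upper bound I would first record two elementary facts. Every indicator matrix has orthonormal columns, so $\mathbf{X}\mathbf{X}^\top$ and $\I - \mathbf{X}\mathbf{X}^\top$ are orthogonal projections and are therefore contractions in Frobenius norm. Moreover, the proxy $\bPhi\X{k}{d}\Q$ is obtained from $\bPhi$ by appending $d-k$ zero columns (the action of $\X{k}{d}$) and then multiplying on the right by the orthogonal matrix $\Q$; neither operation changes the Frobenius norm of a left-multiple. The key consequence is the invariance $\|(\I - \mathbf{X}\mathbf{X}^\top)\bPhi\X{k}{d}\Q\|_F = \|(\I - \mathbf{X}\mathbf{X}^\top)\bPhi\|_F$ for \emph{every} assignment $\mathbf{X}$. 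This means the $k$-means problem posed with the $n\times d$ features $\bPhi\X{k}{d}\Q$ has, assignment by assignment, exactly the same cost as the one posed with $\bPhi$. It lets me replace $\bPhi$ by the $d$-column proxy, which now has the same shape as $\bPsi$ and can be compared to it directly through the error $\mathbf{E} = \bPsi - \bPhi\X{k}{d}\Q$, whose norm is precisely the quantity appearing in the statement.

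I would then bound $C_\Psi = \|(\I - \Xopt{\Psi}\Xopt{\Psi}^\top)\bPhi\X{k}{d}\Q\|_F$ (using the invariance) by $\|(\I - \Xopt{\Psi}\Xopt{\Psi}^\top)\bPsi\|_F + \|\mathbf{E}\|_F$, the second term arising because the projection is a contraction. Since $\Xopt{\Psi}$ is optimal for $\bPsi$ by \eqref{eq:CSC_assignment}, the first term is at most $\|(\I - \Xopt{\Phi}\Xopt{\Phi}^\top)\bPsi\|_F$; applying the triangle inequality and the contraction property once more, and then invoking the invariance again, gives $\|(\I - \Xopt{\Phi}\Xopt{\Phi}^\top)\bPhi\X{k}{d}\Q\|_F + \|\mathbf{E}\|_F = C_\Phi + \|\mathbf{E}\|_F$. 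Chaining the two estimates yields $C_\Psi \leq C_\Phi + 2\|\mathbf{E}\|_F = C_\Phi + 2\|\bPsi - \bPhi\X{k}{d}\Q\|_F$, as required.

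The step I expect to require the most care is the invariance observation, because both the resizing $\X{k}{d}$ (which alters the number of columns) and the rotation $\Q$ must be shown to leave every per-assignment cost unchanged; this is exactly what licenses swapping $\bPhi$ for a same-shape proxy and what makes the arbitrary unitary $\Q$ insertable for free, to be optimized later in Thm.~\ref{thm:error_sing}. The rest is bookkeeping: one must apply the triangle inequality twice — once to pass from the true features to $\bPsi$ and once to return — while carefully tracking which assignment is optimal for which feature matrix, since each of the two detours contributes one copy of $\|\mathbf{E}\|_F$ and hence the factor $2$.
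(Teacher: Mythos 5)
Your proof is correct and follows essentially the same route as the paper's: the lower bound from optimality of $\Xopt{\Phi}$, and the upper bound via the invariance $\|(\I - \mathbf{X}\mathbf{X}^\top)\bPhi\X{k}{d}\Q\|_F = \|(\I - \mathbf{X}\mathbf{X}^\top)\bPhi\|_F$, two triangle inequalities around the error $\Err = \bPsi - \bPhi\X{k}{d}\Q$, the contraction property of $\I - \mathbf{X}\mathbf{X}^\top$, and the optimality of $\Xopt{\Psi}$ for $\bPsi$, each detour contributing one copy of $\|\Err\|_F$. The only cosmetic difference is that you state the per-assignment invariance explicitly as a standalone observation, whereas the paper applies it inline at the two places it is needed.
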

Being able to show that the additive term is small encompasses the result of Thm.~\ref{theorem:csc}, ensuring distance preservation. However, this statement is stronger than the previous one as our lemma is not necessarily true under distance preservation only.



\begin{proof}
    Let $\Xopt{\Phi}$ and $\Xopt{\Psi}$ be respectively the SC and CSC clustering assignments. Moreover, we denote for compactness the additive error term by $\Err = \bPsi - \bPhi \X{k}{d} \Q$.
    We have that
    \begin{align*}
        C_\Psi & = \|\bPhi - \Xopt{\Psi}\XoptT{\Psi} \bPhi\|_F\\
            \swapversion{& = \|(\I - \Xopt{\Psi}\XoptT{\Psi}) \bPhi \X{k}{d} \Q\|_F\\}{}
            & = \|(\I - \Xopt{\Psi}\XoptT{\Psi}) (\bPsi - \Err)\|_F\\
            & \leq \|(\I - \Xopt{\Psi}\XoptT{\Psi}) \bPsi\|_F + \|(\I - \Xopt{\Psi}\XoptT{\Psi}) \Err\|_F\\
            & \leq \|(\I - \Xopt{\Psi}\XoptT{\Psi}) \bPsi\|_F + \|\Err\|_F\\
            & \leq \|(\I - \Xopt{\Phi}\XoptT{\bPhi}) \bPsi\|_F + \|\Err\|_F\\
            & = \|(\I - \Xopt{\Phi}\XoptT{\Phi}) (\bPhi\X{k}{d}\Q + \Err)\|_F + \|\Err\|_F\\
            & \leq \|(\I - \Xopt{\Phi}\XoptT{\Phi}) \bPhi\X{k}{d} \Q\|_F + 2 \, \|\Err\|_F\\
            & = C_\Phi + 2 \, \|\bPsi - \bPhi \X{k}{d} \Q\|_F
            \stepcounter{equation}\tag{\theequation}\label{eq:comparekmeans}
    \end{align*}
     
    The lower bound directly comes from the fact that $\Xopt{\Phi}$ in eq.~\eqref{eq:SC_cost} defines the argmin of our cost functions thus $C_\Phi \leq C_\Psi$.
\end{proof}


The remaining of this section is devoted to bounding the Frobenius error 
$ \|\bPsi - \bPhi \X{k}{d} \Q\|_F$
between the features of SC and CSC. In order to prove this result, we will first express our Frobenius norm exclusively in terms of the singular values of the random matrix $\R$ and then in a second step we will study the distribution of these singular values. 

Our next result, which surprisingly is an equality, reveals that the achieved error is exactly determined by how close a Gaussian matrix is to a unitary matrix.  

\begin{theorem}
\label{thm:error_sing}
There exists a $d\times d$ unitary matrix $\Q$, such that
\begin{equation}
    \label{eq:thm_sing}
    \|\bPsi - \bPhi \X{k}{d} \Q\|_F = \|\bSigma - \X{k}{d}\|_F,
\end{equation}
where $\bSigma$ is the diagonal matrix holding the singular values of  ${\R}' = \X{k}{n}\U^\top\R$.
\end{theorem}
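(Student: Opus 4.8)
The plan is to collapse the claim onto the compact $k\times d$ matrix $\R'$ and then diagonalise it by a well-chosen $\Q$. The key observation is that the low-pass filter factors as $\Hk = \U\X{n}{k}\X{k}{n}\U^\top$, so that $\bPsi = \Hk\R = \U\X{n}{k}(\X{k}{n}\U^\top\R) = \bPhi\,\R'$, where I use $\bPhi = \Uk = \U\X{n}{k}$ together with the definition $\R' = \X{k}{n}\U^\top\R$. Consequently $\bPsi - \bPhi\X{k}{d}\Q = \bPhi(\R' - \X{k}{d}\Q)$, and since the columns of $\bPhi$ are orthonormal ($\bPhi^\top\bPhi = \X{k}{n}\U^\top\U\X{n}{k} = \I$), left multiplication by $\bPhi$ preserves the Frobenius norm. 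This already reduces the left-hand side to $\|\R' - \X{k}{d}\Q\|_F$, eliminating both $\U$ and the ambient dimension $n$.

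Next I would identify the set over which $\Q$ ranges. Because $\X{k}{d}\Q$ is nothing but the first $k$ rows of $\Q$, letting $\Q$ range over all $d\times d$ unitary matrices makes $M := \X{k}{d}\Q$ range over the set of $k\times d$ matrices with orthonormal rows, i.e., those satisfying $MM^\top = \I$. It therefore suffices to exhibit one such $M$ with $\|\R' - M\|_F = \|\bSigma - \X{k}{d}\|_F$. I would introduce the SVD $\R' = \U'\bSigma\mathbf{V}^\top$ with $\U'\in\Rbb^{k\times k}$ and $\mathbf{V}\in\Rbb^{d\times d}$ unitary and $\bSigma\in\Rbb^{k\times d}$ the rectangular diagonal matrix of singular values, and take $M = \U'\X{k}{d}\mathbf{V}^\top$. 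This $M$ has orthonormal rows since $\X{k}{d}\X{d}{k}=\I$, and it is realised by the unitary matrix $\Q = \left(\begin{smallmatrix}\U' & 0\\ 0 & \I\end{smallmatrix}\right)\mathbf{V}^\top$, because $\X{k}{d}$ then simply reads off the block $\U'$, giving $\X{k}{d}\Q = \U'\X{k}{d}\mathbf{V}^\top = M$.

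The equality would then follow from orthogonal invariance: $\|\R' - M\|_F = \|\U'(\bSigma - \X{k}{d})\mathbf{V}^\top\|_F = \|\bSigma - \X{k}{d}\|_F$. To justify that this is the \emph{optimal} $\Q$, matching the promise made before Lemma~\ref{lem:csc_error}, I would append a short Procrustes-style argument: since $\|M\|_F^2 = \mathrm{tr}(MM^\top) = k$ is constant on the feasible set, minimising $\|\R' - M\|_F$ amounts to maximising $\mathrm{tr}(\R' M^\top)$; writing this trace in the SVD basis as $\mathrm{tr}(\bSigma N^\top)$ with $N = \U'^\top M\mathbf{V}$ again orthonormal-rowed, and using that each such row has unit norm so $|N_{ii}|\le 1$, caps the value by $\sum_i \sigma_i$, attained exactly at $N = \X{k}{d}$, hence at the $M$ constructed above.

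The main obstacle I anticipate is purely the bookkeeping of the rectangular ($k\times d$ with $k\le d$) SVD: keeping the sizes of $\X{k}{d}$, $\X{d}{k}$, $\bSigma$ and the block form of $\Q$ mutually consistent, and verifying carefully that the constructed $M$ both lies in the feasible set and equals $\X{k}{d}\Q$ for a genuinely unitary $\Q$. Once the factorisation $\bPsi = \bPhi\,\R'$ and the orthonormality of $\bPhi$ are in place, every remaining step is a direct application of unitary invariance of the Frobenius norm.
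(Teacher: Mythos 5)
Your proof is correct and follows essentially the same route as the paper: factor $\bPsi = \bPhi\,{\R}'$, use the orthonormal columns of $\bPhi$ to reduce the error to $\|{\R}' - \X{k}{d}\Q\|_F$, and build $\Q$ from the SVD factors of ${\R}'$ --- exactly the paper's construction with $\Q_L = \U'$ and $\Q_R = \mathbf{V}$. Your closing Procrustes argument is a genuine bonus: the paper announces the \emph{optimal} $\Q$ in the text preceding Lemma~\ref{lem:csc_error}, but its proof only exhibits a $\Q$ achieving the stated equality without verifying minimality, whereas you actually establish it.
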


Before presenting the proof, let us observe that ${\R}'$ is an \iid Gaussian random matrix of size $k \times d$ and its entries have zero mean and the same variance as that of $\R$. We use this fact in the following to control the error by appropriately selecting the number of random signals $d$.  


\begin{proof}
    Let us start by noting that, by the unitary invariance of the Frobenius norm, for any $k \times k$ matrix $\M$
    \begin{equation}
        \|\bPhi \M\|_F = \|\U\X{n}{k}\M\|_F = \|\X{n}{k} \M\|_F = \|\M\|_F.
    \end{equation}
    We can thus rewrite the feature error as 
    \begin{align*}
        \|\bPsi - \bPhi \X{k}{d} \Q\|_F & = \|\bPhi\bPhi^\top\R - \bPhi\X{k}{d}\Q\|_F \\
            & = \|\bPhi^\top\R - \X{k}{d}\Q\|_F \\
            & = \|\X{k}{n}\U^\top\R - \X{k}{d}\Q\|_F \\
            & = \|{\R}' - \X{k}{d}\Q\|_F.
            \stepcounter{equation}\tag{\theequation}
    \end{align*}
    We claim that there is a unitary matrix $\Q$ that satisfies eq.~\eqref{eq:thm_sing}. We describe this matrix as follows. Let $ {\R}' ~=~ \Q_L \bSigma \Q_R^\top$ be the singular value decomposition of ${\R}'$ and set
    \begin{equation}
        \Q = \begin{pmatrix} \Q_L & 0 \\ 0 & \I_{d-k}\end{pmatrix}\Q_R^\top.
    \end{equation}
    Substituting this to the feature error, we have that
    \begin{align*}
        \|{\R}' - \X{k}{d}\Q\|_F & = \|\Q_L\bSigma\Q_R^\top - \X{k}{d}\Q\|_F\\
            & = \|\bSigma - \Q_L^\top\X{k}{d}\Q\Q_R\|_F
    \end{align*} \vskip -0.4in
    \begin{align*}
            & = \|\bSigma - \Q_L^\top\X{k}{d}\begin{pmatrix} \Q_L & 0 \\ 0 & \I_{d-k}\end{pmatrix}\Q_R^\top\Q_R\|_F\\
            & = \|\bSigma - \Q_L^\top\begin{pmatrix} \Q_L & 0\end{pmatrix}\|_F\\
            & = \|\bSigma - \X{k}{d}\|_F,
            \stepcounter{equation}\tag{\theequation}
    \end{align*}
    which is the claimed result.
\end{proof}

To bound the feature error further, we will use the following result by Vershynin, whose proof is not reproduced.

\begin{corollary}[adapted from Cor. 5.35~\cite{vershynin2010introduction}]
    Let $\N$ be an $d\times k$ matrix whose entries are independent standard normal random variables. Then for every $t, i \geq 0$, with probability at least $1-\exp(-t^2/2)$ one has
    \begin{equation}
        \sigma_{i}(\N) - \sqrt{d} \leq \sqrt{k} + t, 
    \end{equation}
    where $\sigma_{i}(\N)$ is the $i$th singular value of $\N$.
    \label{cor:Verhynin}
\end{corollary}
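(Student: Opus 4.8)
The statement is the one-sided upper tail for the singular values of a Gaussian matrix, so the plan is to follow the standard concentration-of-measure route and specialize it to the upper bound only (which is why a single $\exp(-t^2/2)$, rather than $2\exp(-t^2/2)$, suffices). First I would reduce the claim for an arbitrary index $i$ to a single statement about the largest singular value: since $\sigma_i(\N) \leq \sigma_{\max}(\N) = \|\N\|$ for every $i$, where $\|\cdot\|$ denotes the spectral norm, it is enough to prove that $\|\N\| \leq \sqrt{d} + \sqrt{k} + t$ holds with probability at least $1 - \exp(-t^2/2)$.

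Next I would use that $\N \mapsto \|\N\|$ is $1$-Lipschitz with respect to the Frobenius norm, which follows from $\left| \|\N_1\| - \|\N_2\| \right| \leq \|\N_1 - \N_2\| \leq \|\N_1 - \N_2\|_F$. Regarding $\N$ as a standard Gaussian vector in $\Rbb^{dk}$, the Gaussian concentration inequality for Lipschitz functions then gives, for any $t \geq 0$, $\Prob\!\left( \|\N\| - \Esp\|\N\| \geq t \right) \leq \exp(-t^2/2)$. This centers the tail at the mean, and all that remains is to control $\Esp\|\N\|$.

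The hard part will be establishing $\Esp\|\N\| \leq \sqrt{d} + \sqrt{k}$, which is the step I expect to be the main obstacle. I would obtain it from Gordon's Gaussian comparison inequality: writing $\|\N\| = \sup_{\|u\| = \|v\| = 1} \langle \N v, u \rangle$ exhibits $\|\N\|$ as the supremum of the Gaussian process $X_{u,v} = \sum_{i,j} \N_{ij}\, u_i v_j$, which I would compare against the simpler process $Y_{u,v} = \langle g, u \rangle + \langle h, v \rangle$ with $g \in \Rbb^d$ and $h \in \Rbb^k$ independent standard Gaussian vectors. The covariance hypotheses of Gordon's theorem are checked directly, and they yield $\Esp\|\N\| = \Esp \sup_{u,v} X_{u,v} \leq \Esp \sup_{u,v} Y_{u,v} = \Esp\|g\| + \Esp\|h\| \leq \sqrt{d} + \sqrt{k}$, where the final inequality is Jensen applied to $\Esp\|g\|^2 = d$ and $\Esp\|h\|^2 = k$.

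Combining the three ingredients, with probability at least $1 - \exp(-t^2/2)$ one has $\sigma_i(\N) \leq \|\N\| \leq \Esp\|\N\| + t \leq \sqrt{d} + \sqrt{k} + t$, i.e.\ $\sigma_i(\N) - \sqrt{d} \leq \sqrt{k} + t$, as claimed. An alternative for the expectation bound would be an $\varepsilon$-net argument over the two unit spheres together with a union bound; this avoids invoking Gordon's theorem but generally loses constant factors and does not cleanly reproduce the additive form $\sqrt{d} + \sqrt{k}$, so the Gaussian comparison route is the one I would pursue.
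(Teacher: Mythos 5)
Your proposal is correct and matches the paper's approach: the paper does not prove this corollary at all but simply cites Vershynin's Cor.~5.35, and your argument --- reducing $\sigma_i(\N)$ to $\sigma_{\max}(\N)=\|\N\|$, applying Gaussian concentration to the $1$-Lipschitz spectral norm, and bounding $\Esp\|\N\| \leq \sqrt{d}+\sqrt{k}$ via Gaussian comparison --- is precisely the standard proof of that cited result, including the correct observation that keeping only the upper tail replaces $2\exp(-t^2/2)$ by $\exp(-t^2/2)$. No gap to report.
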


Exploiting this result, the following corollary of Thm.~\ref{thm:error_sing} reveals the relation of the feature error and the number of random signals $d$.

\begin{corollary}\label{cor:approx_static_feat}
There exists a $d\times d$ unitary matrix $\Q$, such that, for every $t\geq 0$, one has
    \begin{align}
        \|\bPsi - \bPhi \X{k}{d} \Q\|_F \leq \sqrt{\frac{k}{d}} \, (\sqrt{k}+t),
    \end{align}
    with probability at least $1-\exp(-t^2/2)$. 
\end{corollary}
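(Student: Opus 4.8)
The plan is to feed the exact identity of Thm.~\ref{thm:error_sing} into the singular-value concentration of Cor.~\ref{cor:Verhynin}. Thm.~\ref{thm:error_sing} already reduces the feature error to $\|\bSigma - \X{k}{d}\|_F$ for a suitable unitary $\Q$, so it suffices to bound this quantity. Both $\bSigma$ and $\X{k}{d}$ are $k\times d$ matrices supported on their main diagonal: $\bSigma$ carries the $k$ singular values $\sigma_1,\dots,\sigma_k$ of ${\R}'$, while $\X{k}{d}$ carries ones. Their difference is therefore again diagonal, and
\[
\|\bSigma - \X{k}{d}\|_F^2 = \sum_{i=1}^{k}\bigl(\sigma_i({\R}')-1\bigr)^2 .
\]
What remains is a purely probabilistic statement about the $k$ singular values of the Gaussian matrix ${\R}'$.

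To invoke Cor.~\ref{cor:Verhynin}, which is stated for a matrix with standard normal entries, I would first rescale. As noted right after Thm.~\ref{thm:error_sing}, ${\R}'$ is a $k\times d$ matrix with \iid Gaussian entries of variance $1/d$; hence ${\R}' = \tfrac{1}{\sqrt d}\,\N^\top$, where $\N$ is a $d\times k$ standard normal matrix. Since singular values are invariant under transposition, $\sigma_i({\R}') = \sigma_i(\N)/\sqrt d$ for every $i$. Cor.~\ref{cor:Verhynin} then bounds each $\sigma_i(\N)$ near $\sqrt d$, giving $|\sigma_i({\R}')-1| \le (\sqrt k + t)/\sqrt d$ and thus $\bigl(\sigma_i({\R}')-1\bigr)^2 \le (\sqrt k + t)^2/d$. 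Summing the $k$ identical bounds in the display above and taking square roots yields $\|\bSigma - \X{k}{d}\|_F \le \sqrt{k/d}\,(\sqrt k + t)$, which is the claim.

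The step that needs the most care is turning the deviation bound for $\sigma_i$ into a bound for the \emph{squared} deviation $(\sigma_i-1)^2$. Squaring is harmless only if $\sigma_i({\R}')$ cannot drop far below $1$, so a one-sided estimate does not suffice; what is required is the two-sided control $|\sigma_i(\N)-\sqrt d| \le \sqrt k + t$. This is exactly what the underlying Vershynin bound (Cor.~5.35) delivers, since it simultaneously controls $\sigma_{\min}(\N)$ and $\sigma_{\max}(\N)$ and every intermediate singular value lies between them. Consequently all $k$ deviations are governed by a single event, and no union bound over $i$ is needed: the probability $1-\exp(-t^2/2)$ already refers to this joint event. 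I would therefore be explicit in using the two-sided form of Cor.~\ref{cor:Verhynin} when passing to $(\sigma_i-1)^2$.
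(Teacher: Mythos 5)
Your argument is essentially the paper's own proof: reduce to $\|\bSigma - \X{k}{d}\|_F$ via Thm.~\ref{thm:error_sing}, rescale ${\R}'$ to a standard normal matrix (handling the transpose, since singular values are transposition-invariant), apply Cor.~\ref{cor:Verhynin} to each of the $k$ singular values, and sum. You are in fact more careful than the paper on the one point that matters: the paper's proof squares the deviation after invoking only the upper bound $\sigma_i({\R}') \leq 1 + (\sqrt{k}+t)/\sqrt{d}$, which, as you correctly flag, does not control $(\sigma_i({\R}')-1)^2$ unless the singular values are also bounded from below; the two-sided form of Vershynin's result is indeed what is needed. The only caveat is the failure probability: controlling $\sigma_{\min}$ and $\sigma_{\max}$ simultaneously in Vershynin's Cor.~5.35 costs a union bound over the two tails, giving probability $1-2\exp(-t^2/2)$ rather than $1-\exp(-t^2/2)$, a constant that both your write-up and the paper's statement gloss over.
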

\begin{proof}
To obtain the following extremal inequality for the singular values of ${\R}'$, we note that ${\R}'$ is composed of \iid Gaussian random variables with zero mean and variance $1/{d}$, and thus use Cor.~\ref{cor:Verhynin} setting ${\R}' = \N/d$ and thus for every $i$,
\begin{align}
    \sigma_{i}({\R}') &= {\sigma_i(\N)}/{\sqrt{d}} \notag \\ 
    &\leq \frac{\sqrt{d} + \sqrt{k} + t}{\sqrt{d}} = 1 + \frac{\sqrt{k} + t}{\sqrt{d}}.
\end{align}
By simple algebraic manipulation, we then find that 
\begin{align*}
    \|\bSigma - \X{k}{d}\|_F^2 & = {\sum_{i=1}^k{ \left(\sigma_i({\R}') - 1 \right)^2}}\\
        &\hspace{-0mm} \leq {k \, \left(\frac{\sqrt{k} + t}{\sqrt{d}}\right)^2} = {\frac{k}{d}} (\sqrt{k}+t)^2,
        \stepcounter{equation}\tag{\theequation}
\end{align*}
which, after taking a square root, matches the claim.
\end{proof}


Finally, Cor.~\ref{cor:approx_static_feat} combined with Lem.~\ref{lem:csc_error} provide the direct proof of Thm.~\ref{thm:cost_CSC} that we introduced earlier.

Before proceeding, we would like to make some remarks about the tightness of the bound. First, guaranteeing that the feature error is small is a stronger condition than distance preservation (though necessary for a complete analysis of CSC). For this reason, the bound derived can be larger than that of Thm.~\ref{theorem:csc}. Nevertheless, we should stress it is tight: the only inequality in our analysis stems from bounding the $k$ largest singular values of the random matrix by Vershynin's tight bound of the maximal singular value.

\subsection{Practical aspects}
\label{subsec:static3}

The study presented above assumes the use of an ideal low-pass filter $\Hk$ of cut-off frequency $\lambda_k$. In practice however, we opt to use the computationally inexpensive Chebyshev graph filters~\cite{shuman2011chebyshev}, which approximate low-pass responses using polynomials. In this case, the used filter takes the form $\tH = \U h(\bLambda)\U^\top$, where $h(\cdot)$ is a polynomial function acting on the diagonal entries of $\bLambda$. 
Nevertheless, it is not difficult to see that, when the filter approximation is tight, the clustering quality is little affected. 

In particular, letting $\tbPsi = \tH\R$ the feature error becomes 
\begin{equation}
    \|\tbPsi - \bPhi\X{k}{d}\Q\|_F \leq \|\tbPsi - \bPsi\|_F + \|\bPsi - \bPhi\X{k}{d}\Q\|_F.
\end{equation}
We recognize the second term that is exactly the result of Cor.~\ref{cor:approx_static_feat} and focus thus on the first term.
\begin{align*}
    \|\tbPsi - \bPsi\|_F & \leq \|\U(h(\bLambda)- \X{n}{k}\X{k}{n})\U^\top\R\|_F\\
        & = \|(h(\bLambda)- \X{n}{k}\X{k}{n})\, \R\|_F\\
        & \leq \|h(\bLambda)- \X{n}{k}\X{k}{n}\|_2 \, \|\R\|_F
        \stepcounter{equation}\tag{\theequation}\label{eq:static_filtering}.
\end{align*}
%
An extension of Thm.~\ref{thm:cost_CSC}, taking into account filter approximation, can thus be derived where  eq.~\eqref{eq:static_filtering} would read with probability as least $1-\exp(-dt^2/2)$:
\begin{equation}
    \|\tbPsi - \bPsi\|_F \leq \O\left(m^{-m} (\sqrt{n} + t)\right),
\end{equation}
where $m$ is the order of the polynomial, $\|h(\bLambda)- \X{n}{k}\X{k}{n}\|_2$ reduces to the approximation error of a steep sigmoid that can be bounded using \citep[][Proposition 3]{shuman2011distributed} and $\|R\|_F$ is bounded in \citep[][Lemma 1]{laurent2000adaptive}. The details are left out due to space constraints.

We notice that the cost of the approximation of ideal low-pass filter depends directly on the quality of the filter. Indeed, the overall error rises with the discrepancies with respect to the ideal filter as shown in eq.~\eqref{eq:static_filtering}. Interestingly, the determination of $\lambda_k$ is also very important because a correct approximation will reduce the number of non-zero eigenvalues and thus the effect of the approximated filter in the very last term of the same equation. Towards these goals, we refer the readers to \cite{di2016efficient,paratte2016fast} and their respective eigencount techniques that allow to approximate the filter in $\O(sm|\E|\log(n))$ operations where $s$ is the number of required iterations and $m$ the order of the polynomial.

\section{Compressive clustering of dynamic graphs} \label{sec:dynamic}

In this section, we consider the problem of spectral clustering a sequence of graphs. We focus on graphs $\G_t$ where $t \in \{1, \dots, \tau\}$, composed of a static vertex set $\V$ and evolving edge sets $\E_t$. 

Identifying each assignment from scratch (using SC or CSC) is in this context a computationally demanding task, as the complexity increases linearly with the number of time-steps. In the following, we exploit two alternative metrics of similarity between graphs at consecutive time-steps in order to reduce the computational cost of clustering. 

\begin{definition}[Metrics of graph similarity]
Two graphs $\G_{t-1}$ and $\G_{t}$ are:
\begin{itemize}
    \item \textbf{($\rho,k$)-spectrally similar} if the spaces spanned by their first $k$ eigenvectors are almost aligned 
    \begin{align} 
        \| \Hk_{t} - \Hk_{t-1} \|_F \leq \rho.
    \end{align}
    \item \textbf{$\rho$-edge similar} if the edge-wise difference of their Laplacians is less than $\rho$
    \begin{align} 
        \| \L_{t} - \L_{t-1} \|_F \leq \rho.
    \end{align} 
\end{itemize}
\end{definition}
We argue that both metrics of similarity are relevant in the context of dynamic clustering.
Two spectrally similar graphs might have very different connectivity in terms of their detailed structure, but possess similar clustering assignments. On the other hand, assuming that two graphs are edge similar is a stronger condition that postulates fine-grained similarities between them. It is however more intuitive and computationally inexpensive to ascertain.

\subsection{Algorithm}

We now present an accelerated method for the assignment of the nodes of an evolving graph. Without loss of generality, suppose that we need to compute the assignment for $\G_t$ while knowing already that of $\G_{t-1}$ and possessing the features that served to compute it. Our approach will be to provide an assignment for graph $\G_t$ that reuses (partially) the features $\bPsi_{t-1}$ computed at step $t-1$. Let $p$ be a number between zero and one, and set $q = 1-p$. Instead of recomputing $\bPsi_{t}$ from scratch running a new CSC routine, we propose to construct a feature matrix $\bTheta_{t}$ which consists of $dq$ new features (corresponding to $\G_{t}$) and $dp$ randomly selected features pertaining to graph $\G_{t-1}$:
\begin{align*}
    \label{eq:features_f}
    \bTheta_{t} & = \begin{pmatrix}\Hk_{t-1} \R_{dp} & \Hk_{t} \R_{dq} \end{pmatrix}\\
         & = \bPsi_{t-1} \S{dp}{d} + \bPsi_{t} \Sbar{dp}{d}
        \stepcounter{equation}\tag{\theequation}
\end{align*}
where we used the sub-identity matrix $ \S{dp}{d} = \I_{d\times dp} \I_{dp\times d}$ and its complement $\Sbar{dp}{d} = \I_{d\times d} - \S{dp}{d}$.

We noticed that an important part of the complexity of CSC is intrinsic to the determination of $\lambda_k$ (step 1 of their algorithm). We propose to benefit from the dynamic setting to avoid recomputing it at each step. 
We propose to admit that the previous value for $\lambda_k$ is a good candidate for the filter at the next step, use it to filter the new random signals and validate whether it suits the new graph. Indeed, the eigencount method requires exactly the result of the step 5 of our algorithm to determine if $\lambda_k$ was correctly determined. We thus compute the new filtered signals and proceed if the eigencount using the new signals is close enough to $k$. Otherwise, we suggest to use the knowledge of the previous result and perform a dichotomy with this additional knowledge following~\cite{di2016efficient}. The final set of features generated in the eigencount now serves as $\bPsi_t$.

The method is sketched in Algo.~\ref{algo:dynamic}. For simplicity, in the following we set $p \leq 0.5$ such that the reused features always correspond to $\G_{t-1}$ (and not to some previous time-step).

\begin{algorithm}[t]
    \begin{algorithmic}[1]
        \REQUIRE $(\G_1, \G_2, \dots, \G_\tau), p, d$
        \ENSURE $(\Ass_1, \Ass_2, \dots, \Ass_\tau)$
        \STATE Determine $h_k^1$ the filter approximation for $\G_1$
        \STATE Find an assignment $\Ass_1$ for $\G_1$ using CSC and $h_k^1$
        \FOR{$t$ from 2 to $\tau$}
            \STATE Randomly pick $dp$ filtered signals generated on $\G_{t-1}$
            \STATE Generate $dq$ feature vectors by filtering as many random signals on $\G_t$ with $h_k^{t-1}$
            \STATE Compute the eigencount on the features of step 5
            \STATE Refine $h_k^t$ if the eigencount is wrong, else keep $h_k^{t-1}$
            \STATE Combine these two sets of features to find an assignment $\Ass_t$ using CSC and $h_k^t$
        \ENDFOR
    \end{algorithmic}
    \label{algo:dynamic}
\caption{Dynamic Compressive Spectral Clustering}
\end{algorithm}


\paragraph{Complexity analysis.} We describe now the complexity of our method and compare it to that of Compressive Spectral Clustering. For simplicity, we focus in a first step on the aspects that do not involve compression.
Note that the first graph in the time-series is computed following exactly the procedure of CSC. However, starting from the second graph, there are two steps where the complexity is reduced with respect to CSC.
First, the optimization proposed for the determination of $\lambda_k$ avoids computing $s$ steps of dichotomy for every graph. We claim that spectrally similar graphs must possess close spectrum, thus close values for $\lambda_k$. One could then expect to recompute $\lambda_k$ from time to time only and that when doing so, benefit from a reduced number of iterations due to the proximity. We call $S$ the total number of steps that we gain. Since one step costs $\O(m|\E|\log(n))$ the total gain is $\O(mS|\E|\log(n))$.
Second, since we reuse random filtered signals from one graph to the next, the total number of computed random signals will necessarily be reduced compared to the use of $\tau$ independent CSC calls. The gain here is $\O(m|\E|dp)$ per time-step.
Finally, all reductions applied through compression can also benefit to our dynamic method. Indeed, we theoretically showed that reusing features from the past can replace the creation of new random signals. Thus, sampling  the combination of old and new signals can be applied exactly as defined in CSC. Then, the result of the sub-assignment can be interpolated also as defined in \cite{tremblay2016compressive}.

\subsection{Analysis of dynamic CSC}

\begin{figure*}[ht!]
\centering    
\subfigure[]{\label{fig:spec_edges}\includegraphics[width=0.32\textwidth]{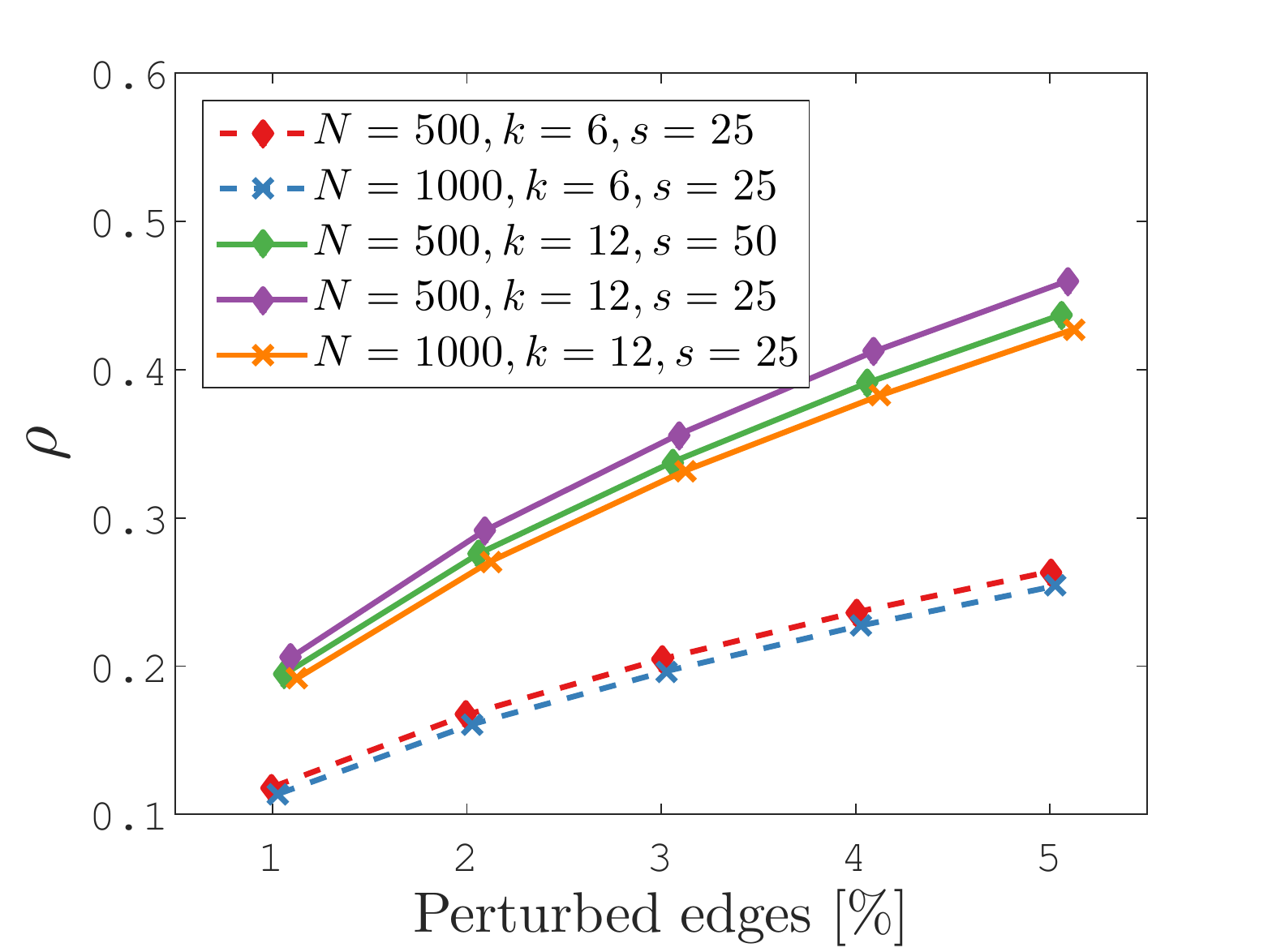}}
\subfigure[]{\label{fig:spec_nodes}\includegraphics[width=0.32\textwidth]{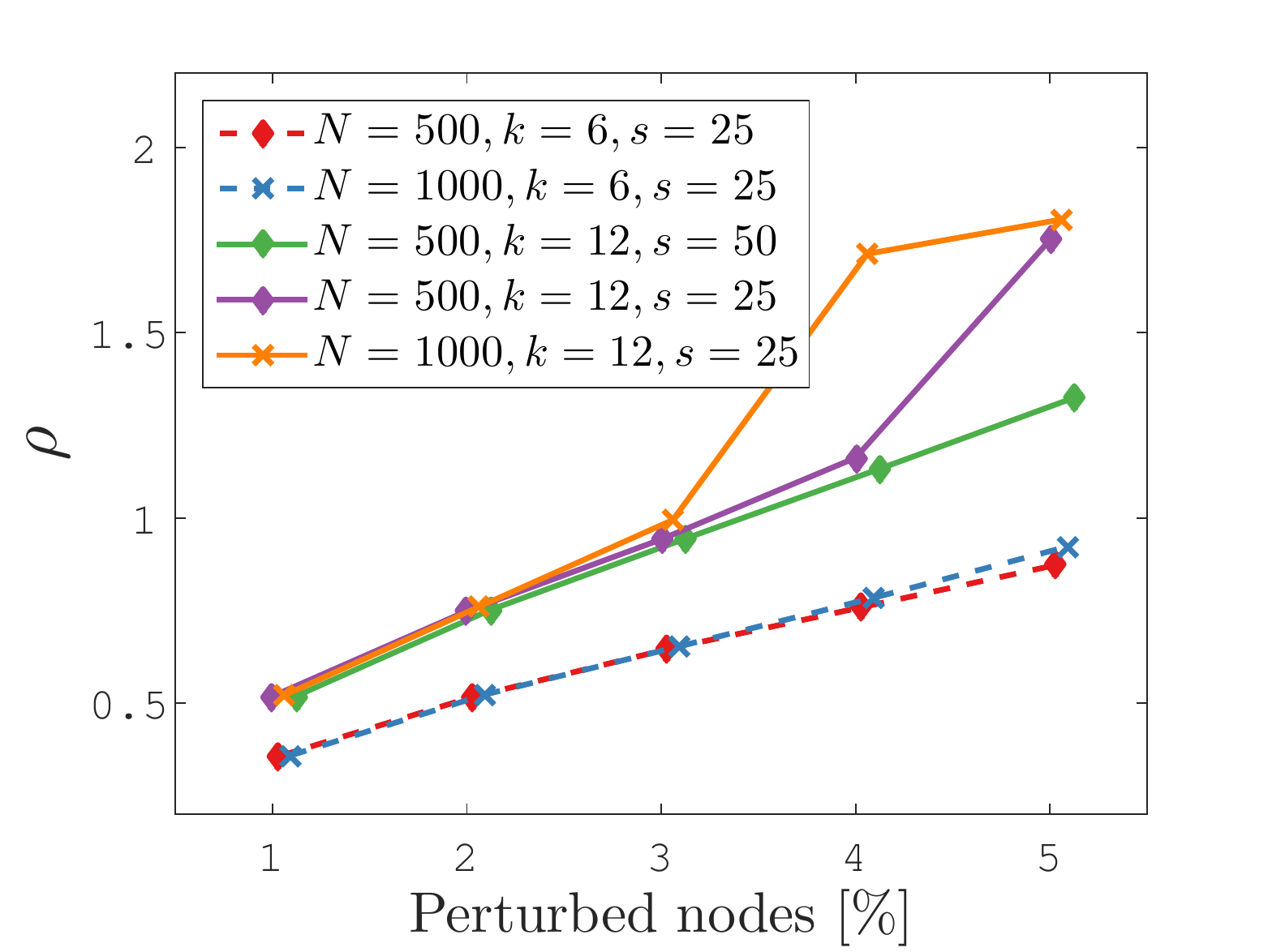}}
\subfigure[]{\label{fig:spec_alpha}\includegraphics[width=0.32\textwidth]{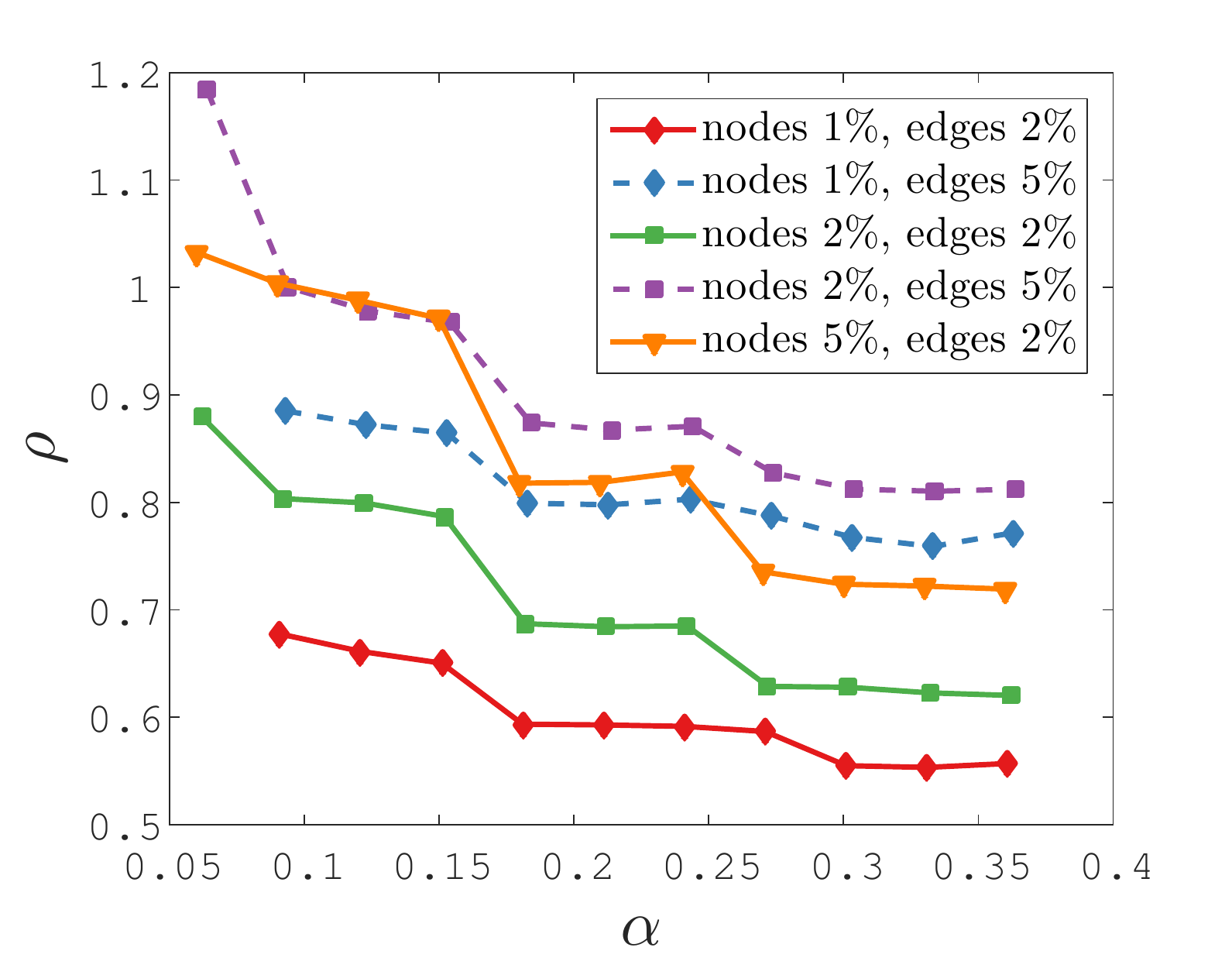}}
\caption{Study of the different perturbation models and their impact on the graph spectral similarity. Graph possessing a large eigengap (highly clusterable) are less subject to perturbations. Proportionally, larger graphs are also less subject to perturbations allowing the number of edge modifications to be larger before a new clustering assignment is required for a given perturbation tolerance. \vspace{0mm}}
\label{fig:spec_sim}
\end{figure*}

Similarly to the static case, our objective is to provide probabilistic guarantees about the approximation quality of the proposed method. Let
\begin{align}
    \Xopt{\Theta_t} = \argmin_{\mathbf{X} \in \mathcal{X}} \|\bTheta_t - \mathbf{X} \mathbf{X}^\top \bTheta_t\|_F.
    \label{eq:DCSC_assignment}
\end{align}
be the clustering assignment obtained from using $k$-means with $\bTheta_t$ as features, and define  the \emph{dynamic CSC cost} $C_{\Theta_t}$ as
\begin{align}
    C_{\Theta_t} = \|\bPhi - \Xopt{\Theta_t} \Xopt{\Theta_t}^\top \bPhi\|_F.
    \label{eq:dyn_CSC_cost}
\end{align}

%
As the following theorem claims, the temporal evolution of the graph introduces \swapversion{now} an additional error term that is a function of the graph similarity (spectral- or edge- wise). 


\begin{theorem}
    \label{thm:cost_dynamic}
    At time $t$, the dynamic CSC cost $C_{\Theta_{t}}$ and the SC cost $C_{\Phi_{t}}$ are related by 
    \begin{align}
       C_{\Phi_t} \leq C_{\Theta_{t}} \leq C_{\Phi_{t}} + 2 \sqrt{\frac{k}{d}} (\sqrt{k} + c) + (1+\delta)p\, \gamma,
    \end{align}
    with probability at least $$1-\exp(-c^2/2) - \exp\left(2\log(n)-dp(\frac{\delta^2}{4}-\frac{\delta^3}{6})\right),$$ where $0 < \delta \leq 1$. Above, $\gamma$ depends only on the similarity of the graphs in question. Moreover, if graphs $\G_{t-1}$ and $\G_{t}$ are
    \begin{itemize}
        \item ($\rho,k$)-spectrally similar, then $ \gamma = \rho $,         
        \item $\rho$-edge similar, then $ \gamma = (\sqrt{2} \, \rho) / \alpha, $ where $\alpha = \min \{\lambda_k^t, \lambda_{k+1}^{(t-1)} - \lambda_k^t\}$ is the Laplacian eigen-gap.
    \end{itemize}
\end{theorem}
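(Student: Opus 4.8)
The plan is to replay the two-step argument of Section~\ref{sec:static}, this time viewing the $dp$ reused columns as a controlled perturbation of the fresh CSC features. First I would establish a dynamic analogue of Lemma~\ref{lem:csc_error}. Its proof never exploits any property of $\bPsi$ beyond its being the matrix of features handed to $k$-means, so substituting $\bTheta_t$ for $\bPsi$ and $\bPhi_t$ for $\bPhi$ goes through verbatim and gives, for every unitary $\Q\in\Rbb^{d\times d}$,
\[
   C_{\Phi_t}\le C_{\Theta_t}\le C_{\Phi_t}+2\,\|\bTheta_t-\bPhi_t\X{k}{d}\Q\|_F .
\]
This reduces the theorem to bounding the dynamic feature error, and the factor $2$ is exactly what will dress the static part of that error.

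Next I would split the feature error into a static and a dynamic contribution. Since $\S{dp}{d}+\Sbar{dp}{d}=\I$, the defining identity for $\bTheta_t$ rearranges to $\bTheta_t=\bPsi_t+(\Hk_{t-1}-\Hk_t)\,\R_{dp}$, where $\bPsi_t=\Hk_t\R$ is the pure CSC feature matrix and $\R_{dp}=\R\S{dp}{d}$ collects the reused columns. A triangle inequality then yields
\[
   \|\bTheta_t-\bPhi_t\X{k}{d}\Q\|_F\le\|\bPsi_t-\bPhi_t\X{k}{d}\Q\|_F+\|(\Hk_{t-1}-\Hk_t)\R_{dp}\|_F .
\]
Choosing the optimal $\Q$ of Thm~\ref{thm:error_sing}, the first term is precisely the static CSC feature error, which Cor~\ref{cor:approx_static_feat} bounds by $\sqrt{k/d}\,(\sqrt{k}+c)$ with probability $1-\exp(-c^2/2)$; this produces the $2\sqrt{k/d}(\sqrt k+c)$ summand and the first failure term.

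The heart of the argument is the dynamic term $\|(\Hk_{t-1}-\Hk_t)\R_{dp}\|_F$, which I would factor as (spectral distance between the two subspaces) $\times$ (a concentration factor for the reused Gaussian signals). Writing $\Delta=\Hk_{t-1}-\Hk_t$ and noting $\Esp[\R_{dp}\R_{dp}^\top]=p\,\I$, the reused signals act as a Johnson--Lindenstrauss map whose squared norms concentrate around a $p$-fraction of $\|\Delta\|_F^2$; a Laurent--Massart/JL concentration over the $dp$ columns (of the same flavour used in Thm~\ref{theorem:csc}), union-bounded over the relevant $\O(n^2)$ pairs of vectors, controls $\|\Delta\R_{dp}\|_F$ by $\|\Delta\|_F$ up to the factor $(1+\delta)$ and the stated $p$-scaling, with failure probability $\exp\!\big(2\log n-dp(\delta^2/4-\delta^3/6)\big)$. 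It then remains to bound $\|\Delta\|_F=\|\Hk_t-\Hk_{t-1}\|_F$ by $\gamma$: in the $(\rho,k)$-spectrally similar case this is immediate, giving $\gamma=\rho$; in the $\rho$-edge similar case I would invoke the Davis--Kahan $\sin\Theta$ theorem, under which the difference of the two bottom-$k$ spectral projections obeys $\|\Hk_t-\Hk_{t-1}\|_F=\sqrt2\,\|\sin\Theta\|_F\le\sqrt2\,\|\L_t-\L_{t-1}\|_F/\alpha\le\sqrt2\,\rho/\alpha$, with $\alpha=\min\{\lambda_k^t,\lambda_{k+1}^{(t-1)}-\lambda_k^t\}$ the separation of the invariant subspaces, giving $\gamma=\sqrt2\,\rho/\alpha$. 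Combining the two feature-error contributions and merging the two failure events by a union bound delivers the stated inequality.

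I expect the dynamic concentration to be the main obstacle: one must keep careful track of the dependence on $p$ and $\delta$ (the reused signals span only a $dp$-dimensional measurement space, so norms concentrate around $p\|\cdot\|^2$ rather than $\|\cdot\|^2$) and pick the right family of test vectors so that the union bound produces exactly the exponent $2\log n-dp(\delta^2/4-\delta^3/6)$. The second delicate point is the Davis--Kahan reduction for the edge-similar case, since obtaining the clean constant $\sqrt2$ and the gap $\alpha$ requires the eigengap to separate the perturbed bottom-$k$ window from above and, through $\lambda_k^t$, from below.
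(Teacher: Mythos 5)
Your proposal is correct and follows essentially the same route as the paper's own proof: the verbatim extension of Lem.~\ref{lem:csc_error} to $\bTheta_t$, the identical decomposition $\bTheta_t = \bPsi_t + (\Hk_{t-1}-\Hk_{t})\R\S{dp}{d}$ followed by a triangle inequality, Cor.~\ref{cor:approx_static_feat} for the static term, a Johnson--Lindenstrauss concentration with a union bound (yielding the exponent $2\log n - dp(\delta^2/4-\delta^3/6)$) for the reused columns, and a Davis--Kahan-type projection perturbation bound for the edge-similar case, which is exactly what the paper's Cor.~\ref{cor:sub_pres} (adapted from Hunter--Strohmer) provides with the same constant $\sqrt{2}/\alpha$. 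The only caveat---inherited from the paper rather than introduced by you---is that this concentration argument controls squared Frobenius norms and therefore delivers a dynamic term $\sqrt{(1+\delta)p}\,\gamma$, which does not literally match the $(1+\delta)p\,\gamma$ appearing in the theorem statement.
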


\begin{proof}
Let $\Xopt{\Phi_{t}}$ and $\Xopt{\Theta_{t}}$ be respectively the optimal SC and dynamic CSC clustering assignments at time $t$, and denote $\Err = \bTheta_{t} - \bPhi_{t} \X{k}{d} \Q$. 
We have that,
\swapversion{
    \begin{align*}
    C_{\Theta_{t}} & = \|\bPhi_{t} - \Xopt{\Theta_{t}} \Xopt{\Theta_{t}}^\top \bPhi_{t}\|_F\\
        & = \|(\I - \Xopt{\Theta_{t}} \Xopt{\Theta_{t}}^\top) (\bTheta_{t} - \Err)\|_F\\
        & \leq \|(\I - \Xopt{\Theta_{t}} \Xopt{\Theta_{t}}^\top) \bTheta_{t}\|_F + \|\Err\|_F\\
        & \leq \|(\I - \Xopt{\Phi_{t}} \XoptT{\Phi_{t}}) \bTheta_{t}\|_F + \|\Err\|_F\\
        & \leq \|(\I - \Xopt{\Phi_{t}} \XoptT{\Phi_{t}}) (\bPhi_{t} \X{k}{d} \Q + \Err)\|_F + \|\Err\|_F\\
        & \leq \|(\I - \Xopt{\Phi_{t}} \XoptT{\Phi_{t}}) \bPhi_{t} \X{k}{d} \Q\|_F + 2\|\Err\|_F \\
        & = \Copt{\bPhi} + 2\|\bTheta_{t} - \bPhi_{t} \X{k}{d} \Q\|_F.
        \stepcounter{equation}\tag{\theequation}
    \end{align*}
}{
    \begin{equation}
        C_{\Theta_{t}} \leq \Copt{\bPhi} + 2\|\bTheta_{t} - \bPhi_{t} \X{k}{d} \Q\|_F,
    \end{equation}
    following the exact same steps as eq.~\eqref{eq:comparekmeans}.\newline
}
By completing the matrices containing the filtering of both graphs, we can see that the error term can be rewritten as
\begin{align*}
    \| \Err \|_F & = \|\bPsi_{t-1} \S{dp}{d} + \bPsi_{t} \Sbar{dp}{d} - \bPhi_{t} \X{k}{d} \Q\|_F
        \stepcounter{equation}\tag{\theequation}\label{eq:split_pf}\\
        & = \|(\bPsi_{t-1} - \bPsi_{t}) \S{dp}{d} + \bPsi_{t} - \bPhi_{t}\X{k}{d} \Q\|_F\\
        & \leq \|(\bPsi_{t} - \bPsi_{t-1}) \S{dp}{d}\|_F + \|\bPsi_{t} - \bPhi_{t}\X{k}{d}\Q\|_F.
\end{align*}

The rightmost term of eq.~\eqref{eq:split_pf} corresponds to the effects of random filtering and has been studied in depth in Thm.~\ref{thm:error_sing} and Cor.~\ref{cor:approx_static_feat}. The rest of the proof is devoted to studying the leftmost term.

We apply the Johnson-Lindenstrauss lemma~\cite{johnson1984extensions} on the term of interest. Setting $\R'~=~\frac1{\sqrt{p}} \R \X{d}{dp}$, we have that
\begin{align*}
    \|(\bPsi_{t} - \bPsi_{t-1}) \S{dp}{d}\|_F^2 & = \|(\Hk_{t} - \Hk_{t-1}) \R\X{d}{dp}\|_F^2\\
        & = p \, \sum_{i = 1}^n \|\R'^\top\,(\Hk_{t} - \Hk_{t-1})^\top \delta_i\|_2^2.
\end{align*}
Matrix $\R' = {p}^{-1/2} \R \X{d}{dp}$ has $n\times dp$ Gaussian \iid entries with zero-mean and variance ${1/dp}$. It follows from the Johnson-Lindenstrauss lemma that
\begin{align*}
    \|(\bPsi_{t} - \bPsi_{t-1}) \S{dp}{d}\|_F^2 
        & \leq p \, (1+\delta) \sum_{i = 1}^n \|(\Hk_{t} - \Hk_{t-1})^\top \delta_i\|_2^2 \\
        & \leq p \, (1+\delta) \|\Hk_{t} - \Hk_{t-1}\|_F^2,
\end{align*}
with probability at least $1-n^{-\beta}$ and for $dp \geq \frac{4+2 \beta}{\delta^2(\frac12 - \frac{\delta}3)} \log(n)$. Coupling the two together we obtain a probability at least equal to $1-\exp(2\log(n)-\frac{dp\delta^2}{2}(\frac{1}{2}-\frac{\delta}{3}))$, where $\delta$ can be set between 0 and 1. A loose bound gives $2p \|\Hk^{(2)} - \Hk^{(1)}\|_F^2$ with probability $1-\exp(2\log(n)-\frac{dp}{12})$.

This concludes the part of the proof concerning spectrally similar graphs. The result for edge-wise similarity follows from Cor.~\ref{cor:sub_pres}.
\end{proof}

\begin{corollary}[adapted from Cor. 4~\cite{hunter2010performance}]
\label{cor:sub_pres}
Let $\Hk_{t-1}$ and $\Hk_{t}$ be the orthogonal projection on to the span of $[\Uk]_{t-1} (= \bPhi_{t-1})$ and $[\Uk]_{t} (=\bPhi_{t})$. If there exists an $\alpha > 0$ such that $\alpha \leq \lambda_{k+1}^{(t-1)} -\lambda_k^t $ and $\alpha \leq \lambda_k^t$, then,
\begin{equation}
    \|\Hk_{t} - \Hk_{t-1}\|_F \leq \frac{\sqrt{2}}{\alpha} \|\L_t - \L_{t-1}\|_F.
\end{equation}
\end{corollary}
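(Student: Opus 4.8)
The statement is a subspace-perturbation bound of Davis--Kahan type, so my plan is to reduce the difference of the two rank-$k$ orthogonal projectors $\Hk_t$ and $\Hk_{t-1}$ to the principal (canonical) angles between the underlying eigenspaces, and then to control those angles by the $\sin\Theta$ theorem. The first step exposes the angles through a purely algebraic identity. Writing $\Hk_t - \Hk_{t-1} = \Hk_t(\I - \Hk_{t-1}) - (\I-\Hk_t)\Hk_{t-1}$ and noting that the two summands are orthogonal in the Frobenius inner product (since $\Hk_t(\I-\Hk_t)=0$ forces $\mathrm{tr}[(\I-\Hk_{t-1})\Hk_t(\I-\Hk_t)\Hk_{t-1}]=0$), I get $\|\Hk_t-\Hk_{t-1}\|_F^2 = \|\Hk_t(\I-\Hk_{t-1})\|_F^2 + \|(\I-\Hk_t)\Hk_{t-1}\|_F^2$. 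Because both projectors have rank $k$, each cross-term equals $\|\sin\Theta\|_F^2$, where $\Theta$ collects the principal angles between $\spn([\Uk]_t)$ and $\spn([\Uk]_{t-1})$; hence $\|\Hk_t-\Hk_{t-1}\|_F = \sqrt2\,\|\sin\Theta\|_F$. This is precisely where the factor $\sqrt2$ in the statement originates.

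Second, I would invoke the generalized Davis--Kahan $\sin\Theta$ theorem with $A=\L_t$, perturbation $E=\L_{t-1}-\L_t$, and the bottom-$k$ spectral projector $\Hk_t$. Choosing the spectral interval that isolates the $k$ smallest eigenvalues, the theorem yields $\|\sin\Theta\|_F \leq \|E\|_F/\delta$, where $\delta$ is the separation between the eigenvalues retained by $\Hk_t$ (those in $[0,\lambda_k^t]$) and the eigenvalues of $\L_{t-1}$ lying outside that block (those at least $\lambda_{k+1}^{(t-1)}$). This separation is exactly the mixed gap $\lambda_{k+1}^{(t-1)}-\lambda_k^t$. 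Combining with the reduction of the first step gives $\|\Hk_t-\Hk_{t-1}\|_F \leq (\sqrt2/\delta)\,\|\L_t-\L_{t-1}\|_F$, and taking $\delta=\alpha$ delivers the claimed inequality.

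The hard part will be making the separation hypothesis of Davis--Kahan match the two conditions imposed on $\alpha$. Because the two eigenspaces belong to \emph{different} matrices, I cannot use the elementary symmetric form of the theorem (which needs a gap inside a single spectrum); I must use the asymmetric/interval version and then verify that the interval enclosing the bottom-$k$ block is admissible for both Laplacians at once. This is what the two inequalities $\alpha \leq \lambda_{k+1}^{(t-1)}-\lambda_k^t$ and $\alpha \leq \lambda_k^t$ are there to guarantee: the first controls the separation above the block, toward $\lambda_{k+1}^{(t-1)}$, while the second ensures the threshold placed around $\lambda_k^t$ leaves enough room so that retained and discarded eigenvalues are both correctly separated from the interval boundary, letting a single gap $\alpha=\min\{\lambda_k^t,\ \lambda_{k+1}^{(t-1)}-\lambda_k^t\}$ simultaneously bound both cross-terms from the first step. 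I would take care with the Laplacian edge case $\lambda_1=0$ to confirm that no separation is lost at the bottom of the spectrum, and would cross-check the constant against the source result~\cite{hunter2010performance}.
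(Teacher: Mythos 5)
Your proposal is correct, but it is not the paper's argument, because the paper does not prove this corollary at all: it imports it from Hunter and Strohmer's Corollary 4, with the hypotheses on $\alpha$ taken from their Theorem 3, so the only ``proof'' in the paper is the citation itself. Your Davis--Kahan route is therefore a genuinely different, self-contained derivation, and both of its ingredients are sound. First, for two rank-$k$ orthogonal projectors $P = \mathbf{H}_{k,t}$ and $Q = \mathbf{H}_{k,t-1}$, the splitting $P - Q = P(\I - Q) - (\I - P)Q$ is indeed Frobenius-orthogonal, and since the two projectors have equal rank each half has squared norm $k - \mathrm{tr}(PQ) = \|\sin\Theta\|_F^2$, giving $\|P - Q\|_F = \sqrt{2}\,\|\sin\Theta\|_F$; equality of ranks is essential here and holds since both projectors retain exactly $k$ eigenvectors. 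Second, the general (two-matrix, interval) form of the $\sin\Theta$ theorem applied with $A = \L_t$, $\tilde{A} = \L_{t-1} = A + E$, the interval $[0, \lambda_k^t]$ containing the retained spectrum of $\L_t$, and the discarded spectrum of $\L_{t-1}$ lying at distance at least $\lambda_{k+1}^{(t-1)} - \lambda_k^t \geq \alpha$ above that interval, yields $\|\sin\Theta\|_F \leq \|E\|_F/\alpha$. Chaining the two steps gives exactly the claimed bound. What your approach buys is transparency: the reader sees where the $\sqrt{2}$ and the gap come from, and the argument stays inside standard perturbation theory. What the paper's approach buys is brevity and an exact match of hypotheses with the cited source.

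One point to fix: your closing paragraph tries to assign a role to the hypothesis $\alpha \leq \lambda_k^t$, and the justification you offer (``the threshold placed around $\lambda_k^t$ leaves enough room\ldots'') is not meaningful. Your own argument never uses that hypothesis: graph Laplacians are positive semidefinite, so no eigenvalue of $\L_{t-1}$ associated with the discarded subspace can approach the interval $[0, \lambda_k^t]$ from below, and the only separation the $\sin\Theta$ theorem needs is the upper one, $\lambda_{k+1}^{(t-1)} - \lambda_k^t \geq \alpha$. This is not a gap --- proving the statement under a strictly weaker hypothesis is still a valid proof --- but you should say plainly that the condition $\alpha \leq \lambda_k^t$ is an artifact of the formulation in Hunter and Strohmer, whose proof technique requires it, rather than invent a justification that your own argument does not support.
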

Note that the bounds on $\alpha$ are those described in their Thm.~3.

\begin{figure*}[ht!]
\centering    
\subfigure[]{\label{fig:dyn-a}\includegraphics[width=0.245\textwidth]{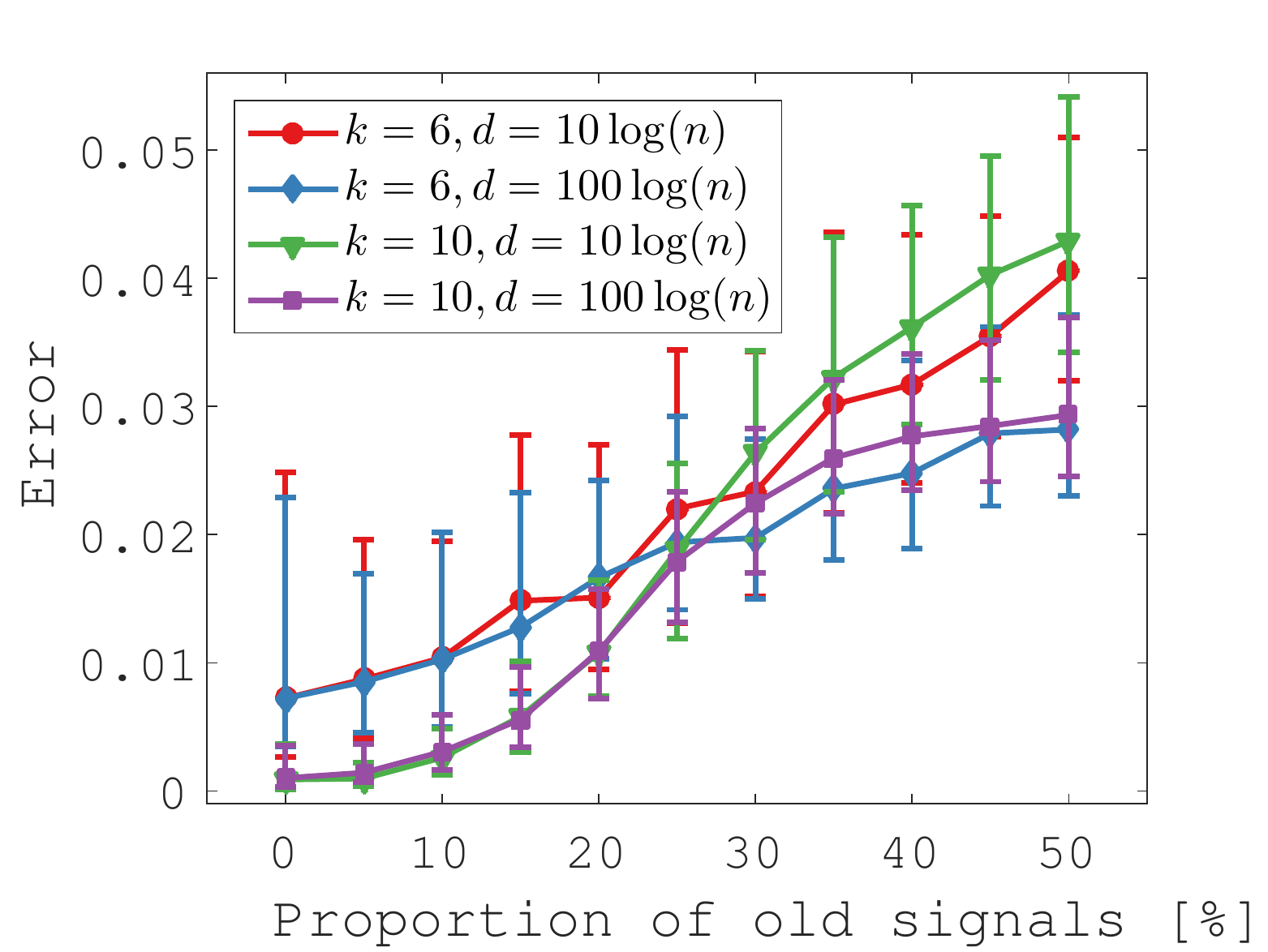}}
\subfigure[]{\label{fig:dyn-b}\includegraphics[width=0.245\textwidth]{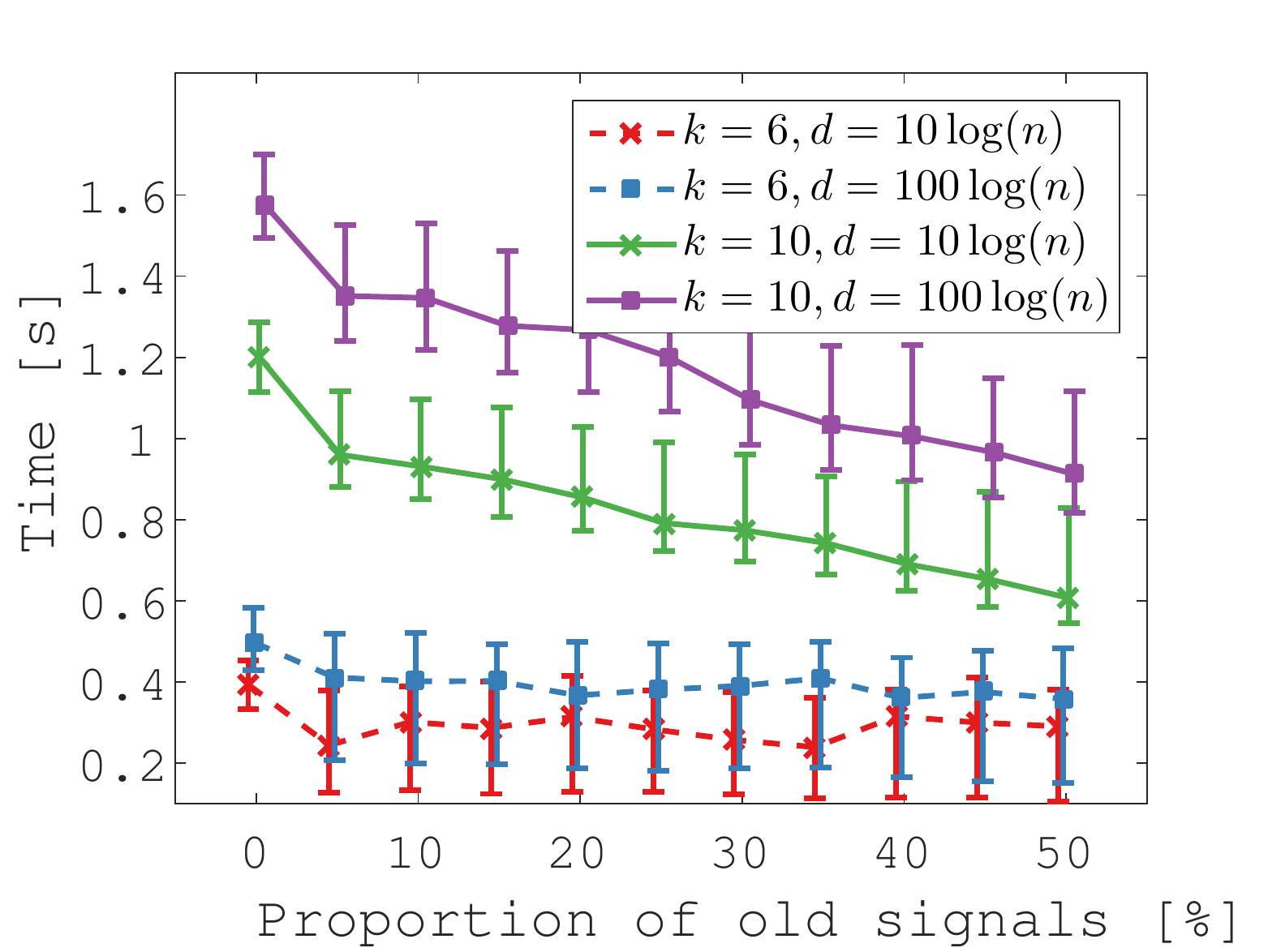}}
\subfigure[]{\label{fig:dyn-c}\includegraphics[width=0.245\textwidth]{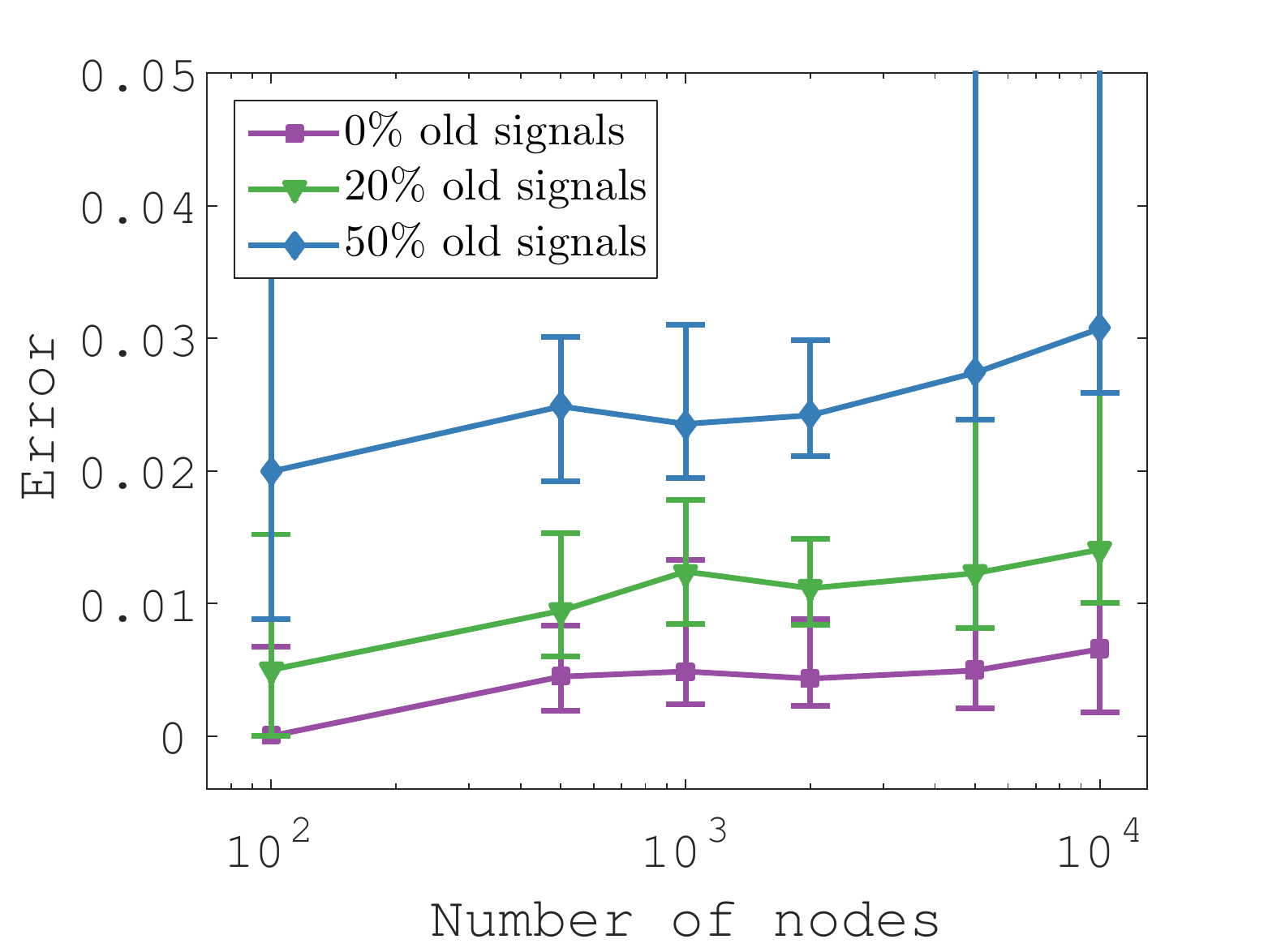}}
\subfigure[]{\label{fig:dyn-d}\includegraphics[width=0.245\textwidth]{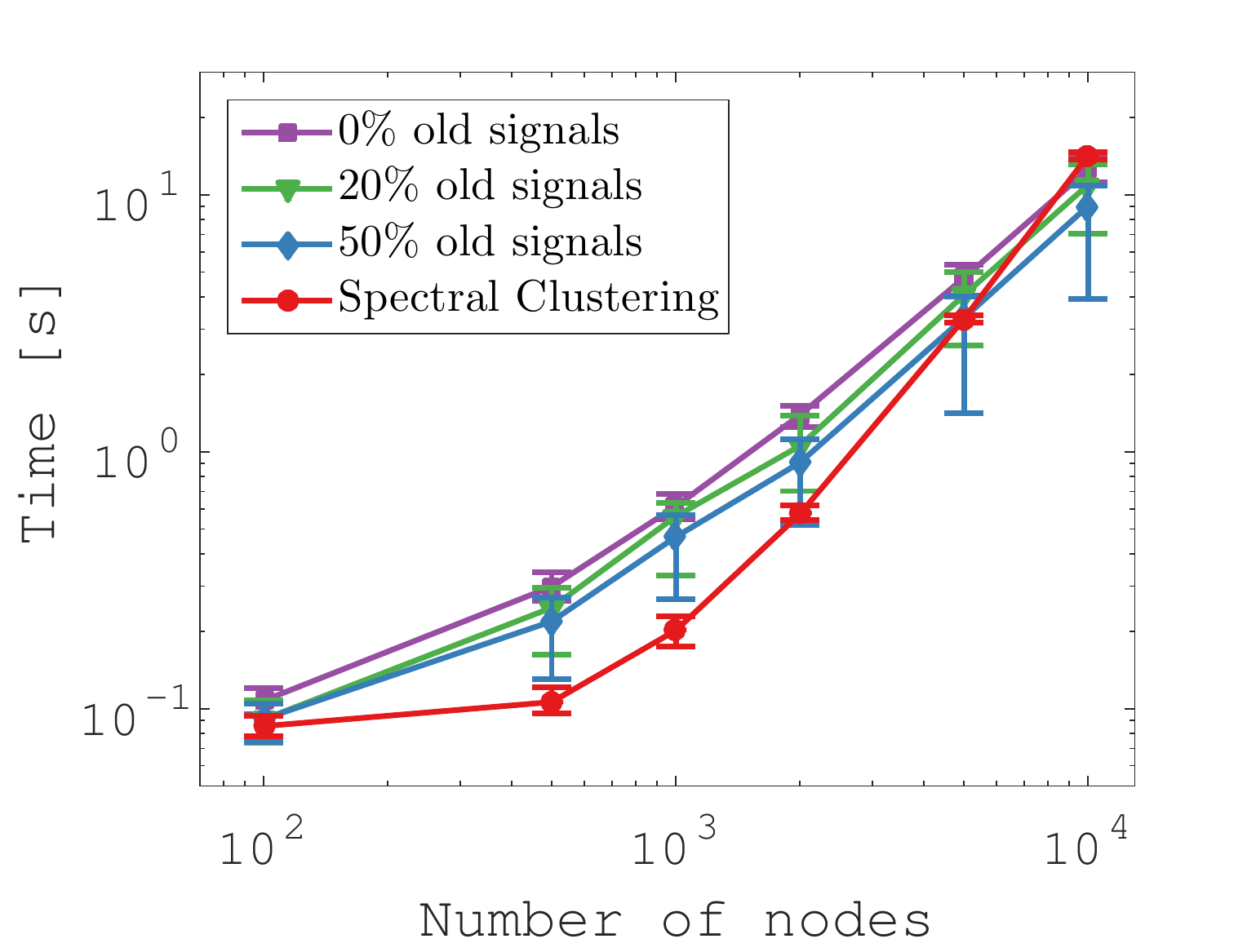}}
\caption{Performances of our algorithm for dynamic graph clustering on synthetic data. Figures (a) and (b) presents the benefits of reusing previous features, while (c) and (d) focus on the scalability of the method with increasing number of nodes. 
\vspace{0mm}}
\label{fig:dyn_syn}
\end{figure*}

\section{Experiments}
\label{sec:experiments}
This section complements the theoretical results described in Section \ref{sec:dynamic}.
First, we study the impact of graph modifications under different perturbation models to the $\rho$-spectral similarity. From there, we present the results of our dynamic clustering algorithm on graphs of different sizes and connectivity.

As is common practice~\citep[e.g.,][]{gorke2013dynamic, tremblay2016compressive} we apply our methods to Stochastic Block Models (SBM). This graph model simulates data clustered into $k$ classes where the $n$ nodes are connected at random with probability for each pair of nodes that depends if the two extremities are belonging to the same cluster ($q_1$) or not ($q_2$), with $q_1 \ll q_2$. In the following, we will qualify the SBM parameters in terms of the node's average degree $s$ and the ratio $e = \frac{q_2}{q_1}$ that represents the graph clusterability.

All our experiments are designed using the GSPBox~\cite{perraudin2014gspbox}.


\subsection{Spectral similarity}
Our theoretical approach highlights the importance of the spectral similarity between two consecutive steps of the graph. We thus start this section by describing how much the graph can change between two assignments. Starting from a SBM, we perform two types of perturbations: edge redrawing and node reassignment. The former simply consists in removing some edges that existed at random and then adding the same number following the probabilities defined by the model. In the latter, one selects nodes instead, removes all edges that share at least one end with the nodes previously picked, assigns those nodes to any other class at random and reconnects these nodes with new edges following the probabilities defined by the graph model.

Figure \ref{fig:spec_sim} shows the similarity of graphs under various perturbation models. Figures \ref{fig:spec_edges} and \ref{fig:spec_nodes} illustrate the impact of the two aforementioned perturbation models separately on SBM of different sizes, whereas in Figure~\ref{fig:spec_alpha} the models are combined. We have three main observations. 

First, the number of clusters $k$ plays a major role in spectral similarilty $\rho$. 
This can be explained by the fact that $\rho$ is bounded by $2\sqrt{k}$. This means that if a similarity threshold is set, one can afford more modifications in the graph when looking for fewer clusters.
Second, we observe that graphs with a larger eigengap remains more similar to the original graph under a given perturbation models, confirming Cor. \ref{cor:sub_pres}.
Finally, it might be also interesting to notice that $\rho$ increases with $n$. This suggests that the algorithm's ability to save computation by reusing information is enhanced for larger graphs.

\subsection{Dynamic clustering of SBM}

We proceed to study the efficiency of dynamic CSC. Based on our previous observations, we set the perturbation model as a combination of the two described in the previous subsection where $1\%$ of the nodes are relabeled and $3\%$ of the edges change. All the results presented here are statistics obtained from simulations replicated 200 times. 

Figure \ref{fig:dyn_syn} displays the results of our clustering for different proportions of previous signals reused in terms of two important metrics: time and accuracy. The error displayed on the figures is the multiplicative error of the $k$-means cost defined in Thm.~\ref{thm:cost_dynamic}, namely $\frac{C_{\Theta_t} - C_{\Phi_t}}{C_{\Phi_t}}$. Since this quantity requires the computation of SC, we are forced to consider problems where $n$ stays in the order of thousands due to its important complexity. While \ref{fig:dyn-a} and \ref{fig:dyn-b} illustrate the benefits of reusing large parts of the previously computed features on graphs with $n=1000, s=25, e=\frac16$, \ref{fig:dyn-c} and \ref{fig:dyn-d} sketch the intuition on large problems with varying values of $n$, setting $k=2\log(n), d=30\log(n)$ while keeping $s=25, e=\frac16$.

First, it is important to notice that as $n$ increases, the time required to perform clustering using CSC methods (including dynamic CSC) outperform that of using SC.
Second, as it could be expected, the error that we observe is slightly increasing as $p$ grows, up to 3\% of the SC cost when reusing 50\% of the previously computed signals. This is very encouraging since in practice, such proportional error is not significant.
Finally, we observe a computational benefit by looking at the time gained by more use of the previous features, as shown in Fig.~\ref{fig:dyn-b}. We emphasize that the improvement in terms of time can attain 25\% of the total time in the most extreme cases depicted in this figure.



\section{Conclusion and Future Work}
The major contribution of this paper is the presentation of a fast clustering algorithm for dynamic graphs that achieves similar quality than Spectral Clustering. We proved theoretically how much the graph can change before losing information for a given computational budget.

We highlighted in this paper several open directions of research for the future. First, it appears clearly in the experiments that the majority of the remaining complexity lies in two steps: the partial $k$-means and the determination of $\lambda_k$. The former is the heart of Spectral Clustering and thus challenging to avoid but seems legitimate to address since there might be various ways to obtain a sub-assignment for some nodes in the graph. The latter, on the opposite, has been already researched in the past although the current methods remain approximated and hamper the results of the filterings.

\bibliographystyle{icml2017}
\bibliography{bibpaper}

\begin{thebibliography}{29}
\providecommand{\natexlab}[1]{#1}
\providecommand{\url}[1]{\texttt{#1}}
\expandafter\ifx\csname urlstyle\endcsname\relax
  \providecommand{\doi}[1]{doi: #1}\else
  \providecommand{\doi}{doi: \begingroup \urlstyle{rm}\Url}\fi

\bibitem[Boutsidis et~al.(2015)Boutsidis, Gittens, and
  Kambadur]{boutsidis2015spectral}
Boutsidis, Christos, Gittens, Alex, and Kambadur, Prabhanjan.
\newblock Spectral clustering via the power method-provably.
\newblock In \emph{Proceedings of the 24th International Conference on Machine
  Learning (ICML)}, 2015.

\bibitem[Chakrabarti et~al.(2006)Chakrabarti, Kumar, and
  Tomkins]{chakrabarti2006evolutionary}
Chakrabarti, Deepayan, Kumar, Ravi, and Tomkins, Andrew.
\newblock Evolutionary clustering.
\newblock In \emph{Proceedings of the 12th ACM SIGKDD international conference
  on Knowledge discovery and data mining}, pp.\  554--560. ACM, 2006.

\bibitem[Chi et~al.(2007)Chi, Song, Zhou, Hino, and Tseng]{chi2007evolutionary}
Chi, Yun, Song, Xiaodan, Zhou, Dengyong, Hino, Koji, and Tseng, Belle~L.
\newblock Evolutionary spectral clustering by incorporating temporal
  smoothness.
\newblock In \emph{Proceedings of the 13th ACM SIGKDD international conference
  on Knowledge discovery and data mining}, pp.\  153--162. ACM, 2007.

\bibitem[Cohen et~al.(2015)Cohen, Elder, Musco, Musco, and
  Persu]{cohen2015dimensionality}
Cohen, Michael~B, Elder, Sam, Musco, Cameron, Musco, Christopher, and Persu,
  Madalina.
\newblock Dimensionality reduction for k-means clustering and low rank
  approximation.
\newblock In \emph{Proceedings of the Forty-Seventh Annual ACM on Symposium on
  Theory of Computing}, pp.\  163--172. ACM, 2015.

\bibitem[Di~Napoli et~al.(2016)Di~Napoli, Polizzi, and Saad]{di2016efficient}
Di~Napoli, Edoardo, Polizzi, Eric, and Saad, Yousef.
\newblock Efficient estimation of eigenvalue counts in an interval.
\newblock \emph{Numerical Linear Algebra with Applications}, 2016.

\bibitem[Fortunato(2010)]{fortunato2010community}
Fortunato, Santo.
\newblock Community detection in graphs.
\newblock \emph{Physics reports}, 486\penalty0 (3):\penalty0 75--174, 2010.

\bibitem[Fowlkes et~al.(2004)Fowlkes, Belongie, Chung, and
  Malik]{fowlkes2004spectral}
Fowlkes, Charless, Belongie, Serge, Chung, Fan, and Malik, Jitendra.
\newblock Spectral grouping using the nystrom method.
\newblock \emph{IEEE transactions on pattern analysis and machine
  intelligence}, 26\penalty0 (2):\penalty0 214--225, 2004.

\bibitem[Gauvin et~al.(2014)Gauvin, Panisson, and Cattuto]{gauvin2014detecting}
Gauvin, Laetitia, Panisson, Andr{\'e}, and Cattuto, Ciro.
\newblock Detecting the community structure and activity patterns of temporal
  networks: a non-negative tensor factorization approach.
\newblock \emph{PloS one}, 9\penalty0 (1):\penalty0 e86028, 2014.

\bibitem[Gittens et~al.(2013)Gittens, Kambadur, and
  Boutsidis]{gittens2013approximate}
Gittens, Alex, Kambadur, Prabhanjan, and Boutsidis, Christos.
\newblock Approximate spectral clustering via randomized sketching.
\newblock \emph{Ebay/IBM Research Technical Report}, 2013.

\bibitem[G{\"o}rke et~al.(2013)G{\"o}rke, Maillard, Schumm, Staudt, and
  Wagner]{gorke2013dynamic}
G{\"o}rke, Robert, Maillard, Pascal, Schumm, Andrea, Staudt, Christian, and
  Wagner, Dorothea.
\newblock Dynamic graph clustering combining modularity and smoothness.
\newblock \emph{Journal of Experimental Algorithmics (JEA)}, 18:\penalty0 1--5,
  2013.

\bibitem[Hammond et~al.(2011)Hammond, Vandergheynst, and
  Gribonval]{hammond2011wavelets}
Hammond, David~K, Vandergheynst, Pierre, and Gribonval, R{\'e}mi.
\newblock {Wavelets on graphs via spectral graph theory}.
\newblock \emph{Applied and Computational Harmonic Analysis}, 30\penalty0
  (2):\penalty0 129--150, 2011.

\bibitem[Hunter \& Strohmer(2010)Hunter and Strohmer]{hunter2010performance}
Hunter, Blake and Strohmer, Thomas.
\newblock Performance analysis of spectral clustering on compressed, incomplete
  and inaccurate measurements.
\newblock \emph{arXiv preprint arXiv:1011.0997}, 2010.

\bibitem[Johnson \& Lindenstrauss(1984)Johnson and
  Lindenstrauss]{johnson1984extensions}
Johnson, William~B and Lindenstrauss, Joram.
\newblock Extensions of lipschitz mappings into a hilbert space.
\newblock \emph{Contemporary mathematics}, 26\penalty0 (189-206):\penalty0 1,
  1984.

\bibitem[Laurent \& Massart(2000)Laurent and Massart]{laurent2000adaptive}
Laurent, Beatrice and Massart, Pascal.
\newblock Adaptive estimation of a quadratic functional by model selection.
\newblock \emph{Annals of Statistics}, pp.\  1302--1338, 2000.

\bibitem[Li et~al.(2011)Li, Lian, Kwok, and Lu]{li2011time}
Li, Mu, Lian, Xiao-Chen, Kwok, James~T, and Lu, Bao-Liang.
\newblock Time and space efficient spectral clustering via column sampling.
\newblock In \emph{Computer Vision and Pattern Recognition (CVPR), 2011 IEEE
  Conference on}, pp.\  2297--2304. IEEE, 2011.

\bibitem[Ng et~al.(2002)Ng, Jordan, Weiss, et~al.]{ng2002spectral}
Ng, Andrew~Y, Jordan, Michael~I, Weiss, Yair, et~al.
\newblock On spectral clustering: Analysis and an algorithm.
\newblock \emph{Advances in neural information processing systems}, 2:\penalty0
  849--856, 2002.

\bibitem[Ning et~al.(2007)Ning, Xu, Chi, Gong, and Huang]{ning2007incremental}
Ning, Huazhong, Xu, Wei, Chi, Yun, Gong, Yihong, and Huang, Thomas.
\newblock Incremental spectral clustering with application to monitoring of
  evolving blog communities.
\newblock In \emph{Proceedings of the 2007 SIAM International Conference on
  Data Mining}, pp.\  261--272. SIAM, 2007.

\bibitem[Paratte \& Martin(2016)Paratte and Martin]{paratte2016fast}
Paratte, Johan and Martin, Lionel.
\newblock Fast eigenspace approximation using random signals.
\newblock \emph{arXiv preprint arXiv:1611.00938}, 2016.

\bibitem[{Perraudin} et~al.(2014){Perraudin}, {Paratte}, {Shuman}, {Martin},
  {Kalofolias}, {Vandergheynst}, and {Hammond}]{perraudin2014gspbox}
{Perraudin}, Nathana{\"e}l, {Paratte}, Johan, {Shuman}, David, {Martin},
  Lionel, {Kalofolias}, Vassilis, {Vandergheynst}, Pierre, and {Hammond},
  David~K.
\newblock {GSPBOX: A toolbox for signal processing on graphs}.
\newblock \emph{ArXiv e-prints}, August 2014.

\bibitem[Puy et~al.(2016)Puy, Tremblay, Gribonval, and
  Vandergheynst]{puy2016random}
Puy, Gilles, Tremblay, Nicolas, Gribonval, R{\'e}mi, and Vandergheynst, Pierre.
\newblock Random sampling of bandlimited signals on graphs.
\newblock \emph{Applied and Computational Harmonic Analysis}, 2016.

\bibitem[Pydi \& Dukkipati(2017)Pydi and Dukkipati]{pydi2017spectral}
Pydi, Muni~Sreenivas and Dukkipati, Ambedkar.
\newblock Spectral clustering via graph filtering: Consistency on the
  high-dimensional stochastic block model.
\newblock \emph{arXiv preprint arXiv:1702.03522}, 2017.

\bibitem[Ramasamy \& Madhow(2015)Ramasamy and Madhow]{ramasamy2015compressive}
Ramasamy, Dinesh and Madhow, Upamanyu.
\newblock Compressive spectral embedding: sidestepping the svd.
\newblock In \emph{Advances in Neural Information Processing Systems}, pp.\
  550--558, 2015.

\bibitem[Shi \& Malik(2000)Shi and Malik]{shi2000normalized}
Shi, Jianbo and Malik, Jitendra.
\newblock Normalized cuts and image segmentation.
\newblock \emph{IEEE Transactions on pattern analysis and machine
  intelligence}, 22\penalty0 (8):\penalty0 888--905, 2000.

\bibitem[Shuman et~al.(2011{\natexlab{a}})Shuman, Vandergheynst, and
  Frossard]{shuman2011chebyshev}
Shuman, David~I, Vandergheynst, Pierre, and Frossard, Pascal.
\newblock Chebyshev polynomial approximation for distributed signal processing.
\newblock In \emph{2011 International Conference on Distributed Computing in
  Sensor Systems and Workshops (DCOSS)}, pp.\  1--8. IEEE, 2011{\natexlab{a}}.

\bibitem[Shuman et~al.(2011{\natexlab{b}})Shuman, Vandergheynst, and
  Frossard]{shuman2011distributed}
Shuman, David~I, Vandergheynst, Pierre, and Frossard, Pascal.
\newblock Distributed signal processing via chebyshev polynomial approximation.
\newblock \emph{arXiv preprint arXiv:1111.5239}, 2011{\natexlab{b}}.

\bibitem[Tremblay et~al.(2016)Tremblay, Puy, Gribonval, and
  Vandergheynst]{tremblay2016compressive}
Tremblay, Nicolas~Tremblay, Puy, Gilles, Gribonval, R{\'e}mi, and
  Vandergheynst, Pierre.
\newblock {Compressive Spectral Clustering}.
\newblock In \emph{{33rd International Conference on Machine Learning}}, New
  York, United States, June 2016.

\bibitem[Tu et~al.(2016)Tu, Ribeiro, Swami, and Towsley]{tu2016detecting}
Tu, Kun, Ribeiro, Bruno, Swami, Ananthram, and Towsley, Don.
\newblock Detecting cluster with temporal information in sparse dynamic graph.
\newblock \emph{arXiv preprint arXiv:1605.08074}, 2016.

\bibitem[Vershynin(2010)]{vershynin2010introduction}
Vershynin, Roman.
\newblock Introduction to the non-asymptotic analysis of random matrices.
\newblock \emph{arXiv preprint arXiv:1011.3027}, 2010.

\bibitem[Von~Luxburg(2007)]{von2007tutorial}
Von~Luxburg, Ulrike.
\newblock A tutorial on spectral clustering.
\newblock \emph{Statistics and computing}, 17\penalty0 (4):\penalty0 395--416,
  2007.

\end{thebibliography}

\end{document}